\useunder{\uline}{\ul}{}
\newtheorem{theorem}{Theorem}[section]
\newtheorem{corollary}{Corollary}[section]
\newtheorem{lemma}{Lemma}[section]
\definecolor{LightGray}{gray}{0.95}
\newcommand{\thickhline}{\Xhline{2.5\arrayrulewidth}}
\title{QuanTA: Efficient High-Rank Fine-Tuning of LLMs with Quantum-Informed Tensor Adaptation}
\author{%
\vspace{5pt}
Zhuo Chen$^{12}$ \quad Rumen Dangovski$^{13}$ \quad Charlotte Loh$^{13}$
\\
\vspace{5pt}
{\bf Owen Dugan$^{12}$ \quad Di Luo$^{124}$$^*$  \quad Marin Soljačić$^{12}$} \\
$^1$NSF AI Institute for Artificial Intelligence and Fundamental Interactions \\
$^2$Department of Physics, Massachusetts Institute of Technology \\
$^3$Department of EECS, Massachusetts Institute of Technology \\
\vspace{3pt}
$^4$Department of Physics, Harvard University \\
\texttt{\{chenzhuo,rumenrd,cloh,odugan,diluo,soljacic\}@mit.edu}
}
\begin{document}

\maketitle

\begin{abstract}
We propose \textbf{Quan}tum-informed \textbf{T}ensor \textbf{A}daptation (\textbf{QuanTA}), a novel, easy-to-implement, fine-tuning method with no inference overhead for large-scale pre-trained language models. By leveraging quantum-inspired methods derived from quantum circuit structures, QuanTA enables efficient \textit{high-rank} fine-tuning, surpassing the limitations of Low-Rank Adaptation (LoRA)---low-rank approximation may fail for complicated downstream tasks. Our approach is theoretically supported by the universality theorem and the rank representation theorem to achieve efficient high-rank adaptations. Experiments demonstrate that QuanTA significantly enhances commonsense reasoning, arithmetic reasoning, and scalability compared to traditional methods. Furthermore, QuanTA shows superior performance with fewer trainable parameters compared to other approaches and can be designed to integrate with existing fine-tuning algorithms for further improvement, providing a scalable and efficient solution for fine-tuning large language models and advancing state-of-the-art in natural language processing. 
\end{abstract}

\section{Introduction}

Pre-trainied large language models (LLMs) have revolutionized natural language processing (NLP) by achieving state-of-the-art performance across various tasks \cite{devlin-etal-2019-bert, radford_language_2019}. Traditionally, these models are adapted to specific downstream applications via full fine-tuning, where all model parameters are retrained. However, as model sizes increase, the computational cost and memory requirements for full fine-tuning become prohibitive, especially with models like GPT-3 \cite{Brown_NEURIPS2020_1457c0d6} with 175 billion parameters, Mixtral \cite{jiang2024mixtral} with $8\times22$ billion parameters, and more recently the LLaMA series \cite{touvron2023llama,touvron2023llama2,llama3modelcard}, containing soon up to 400 billion parameters \cite{metaIntroducingMeta}. These constraints have spurred the development of parameter-efficient fine-tuning (PEFT) methods, which aim to adapt LLMs by updating only a small subset of parameters, thereby reducing resource demands \cite{pmlr-v97-houlsby19a,hu2022lora}.

Among PEFT methods, Low-Rank Adaptation (LoRA) \cite{hu2022lora} has gained prominence due to its simplicity and effectiveness. LoRA fine-tunes LLMs by introducing low-rank matrices into the pre-trained model's weight updates, pragmatically reducing the number of trainable parameters while maintaining performance close to full fine-tuning in many tasks. However, LoRA's reliance on low-rank approximations can sometimes lead to a performance gap compared to full fine-tuning, particularly for complex tasks, as it may not capture all necessary task-specific adaptations \cite{biderman2024lora}.

Recently, there have been many attempts to generalize LoRA using tensor-based methods \cite{,bershatsky2024lotr, yang2024loretta}. 
However, these approaches primarily focus on reducing the number of trainable parameters within the low-rank framework yet they continue to face the same limitations of restricted representation. In Quantum mechanics, quantum circuit provides a natural realization of unitary matrix which is full rank, motivating us to develop new schemes for high-rank fine-tuning.

\begin{figure}[t]
    \centering
    \includegraphics[width=0.88\linewidth]{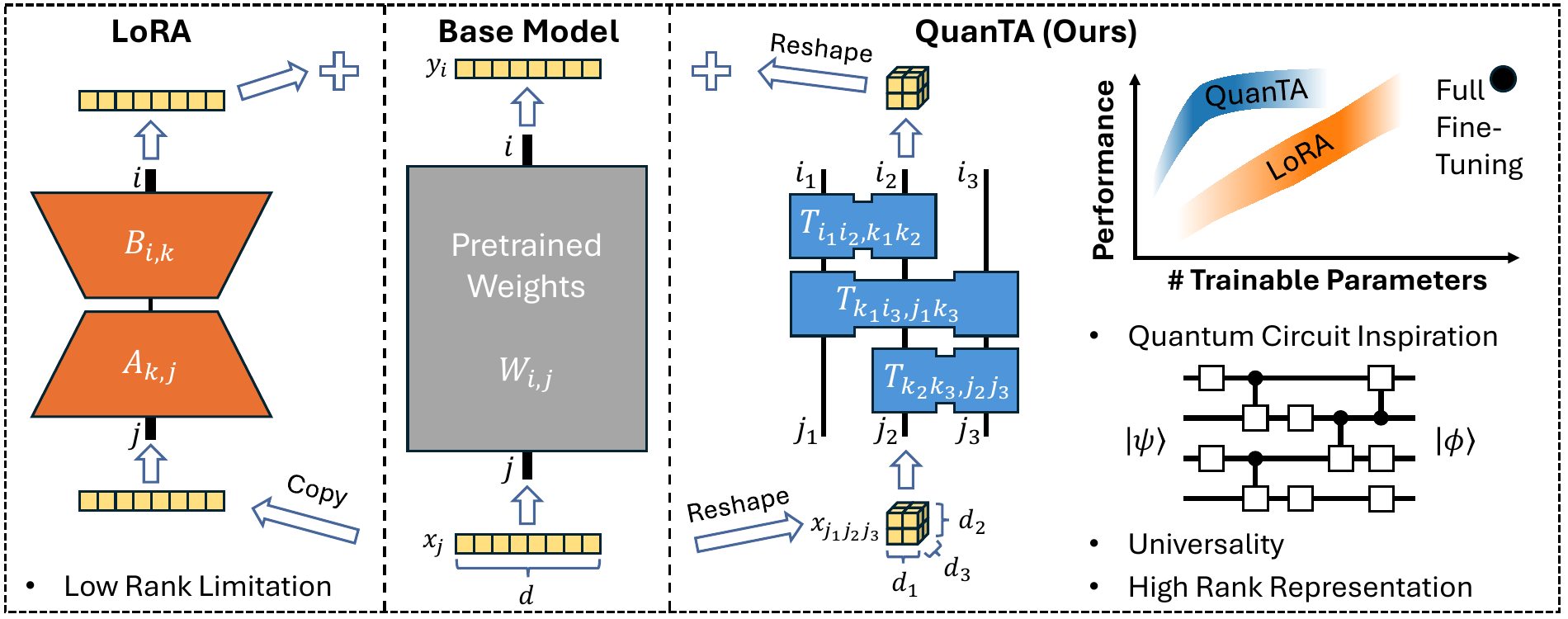}
    \caption{
    Conceptual comparison of QuanTA and LoRA methods. LoRA parameterizes the weight matrix update as a outer product of two low-rank matrices, limiting its capacity. QuanTA, inspired by quantum circuits, uses tensors that operate on specific axes of the (reshaped) input, enabling high-rank parameterization. Supported by the universality theorem and rank representation theorem, QuanTA can represent arbitrary matrices effectively, allowing it to achieve performance comparable to or sometimes even better than full fine-tuning, with only a fraction of the parameters. Note: the performance graph is a conceptual illustration.
    }
    \vspace{-10pt}
    \label{fig:diagram}
\end{figure}

Inspired by these advancements, we propose \textbf{Quan}tum-informed \textbf{T}ensor \textbf{A}daptation (\textbf{QuanTA}) \footnote{\url{https://github.com/quanta-fine-tuning/quanta}} a novel, easy-to-implement, fine-tuning method with no inference overhead inspired by quantum circuits (Fig.~\ref{fig:diagram}). QuanTA enables efficient high-rank adaptations by utilizing tensor operations analogous to those in quantum circuits, addressing the limitations inherent in low-rank methods like LoRA. 

In summary, our contributions are as follows:
\begin{enumerate}
    \item We introduce QuanTA, a novel, easy-to-implement, PEFT method with no inference overhead inspired by quantum circuits, enabling efficient high-rank fine-tuning without additional inference latency and offering the potential for integration with other existing PEFT methods for further enhancement. 
    \item We present the universality theorem and the rank representation theorem, theoretically proving that QuanTA can efficiently parameterize high-rank matrices, overcoming the limitations of low-rank methods.
    \item We validate QuanTA's performance through extensive experiments, demonstrating significant improvements in various reasoning tasks and efficiency compared to traditional methods.
\end{enumerate}

\section{Related Works}

\textbf{Parameter-Efficient Fine-Tuning (PEFT)} methods aim to address the computational burdens associated with fine-tuning large-scale models by adjusting a relatively small fraction of the total parameters to fit a specific downstream task. Roughly speaking, there are three existing categories of PEFT methods:

\begin{enumerate}
    \item \textbf{Adapter-based methods.} These methods introduce additional trainable modules into the structure of a pre-trained, otherwise frozen, model. These modules can be integrated in various ways: series adapters are interposed between existing layers like attention or MLP components \cite{pmlr-v97-houlsby19a, pfeiffer-etal-2020-adapterhub, wang-etal-2022-adamix, he-etal-2022-sparseadapter}, while parallel adapters coexist alongside these components \cite{he2022towards}. In general, these methods tend to increase the inference load due to the extra components that are not readily integrated into the original model weights.

    \item \textbf{Prompt/Prefix-based methods.} These methods employ additional prompts or soft tokens at the beginning of the input sequence, focusing fine-tuning efforts on these newly introduced vector embeddings while maintaining the original model weights static \cite{lester-etal-2021-power,li-liang-2021-prefix}. However, this approach can suffer from suboptimal performance and increased inference times. In addition, the soft tokens take up space of real tokens and therefore reduce the effective context size available for the model.

    \item \textbf{Reparameterization-based methods.} These methods modify the existing weights with some parameter-efficient parameterization during the fine-tuning phase. Among these methods, Low-Rank Adaptation (LoRA) \cite{hu2022lora} and its variants, such as DoRA \cite{liu2024dora} and VeRA \cite{kopiczko2024vera}, are particularly noteworthy for their widespread adoption and robust performance across various tasks. In addition to LoRA, many other PEFT methods also belong to this category, including more sophisticated approaches such as Hadamard \cite{HyeonWoo2021FedParaLH}, Kronecker product \cite{edalati2022krona} reparameterizations as well as many other methods \cite{wang2023multilora, ding2023sparse, zhang2023adalora, hayou2024lora}.  Crucially, methods in this category do not impose additional inference burdens after fine-tuning as the modified weights can be merged into the pre-trained model weights prior to deployment. 
\end{enumerate}

Besides these three categories, there are additional PEFT methods such as LoTA \cite{bershatsky2024lotr}, where tensor decompositions are performed across multiple weights, LoRETTA \cite{yang2024loretta}, which uses tensor train decomposition for each weight matrix and has both adapter-based and reparameterization-based variants, MPO-based fine-tuning \cite{liu-etal-2021-enabling}, and very recently LISA \cite{pan2024lisa}, ReFT \cite{wu2024reft} and MoRA \cite{jiang2024mora}.

\textbf{Physics-inspired machine learning} In parallel, there have been various attempts to integrate physics-based priors into machine learning for many years. Symmetries and physics structure have been incorporated into the neural networks architecture and training in various applications to achieve notable performance~\citep{doi:10.1126/science.aag2302,PhysRevResearch.5.013216,chen2022simulating,Chen_NEURIPS2023_01772a8b,luo2019backflow,luo2021gauge,thomas2018tensor,raissi2019physics}. Various classical and quantum physics processes have been utilized to design new neural networks~\cite{greydanus2019hamiltonian,cranmer2020lagrangian} and generative models~\cite{pmlr-v37-sohl-dickstein15,Ho_NEURIPS2020_4c5bcfec,Song_NEURIPS2019_3001ef25,song2021scorebased,liu2018differentiable, liu2023genphys,stoudenmire2016supervised}.

\section{Motivation: Low Rank is not Always Sufficient}

\begin{wrapfigure}{r}{0.45\textwidth}

\begin{minipage}{\linewidth}
    \centering
     \vspace{-15pt}
    \includegraphics[width=\linewidth]{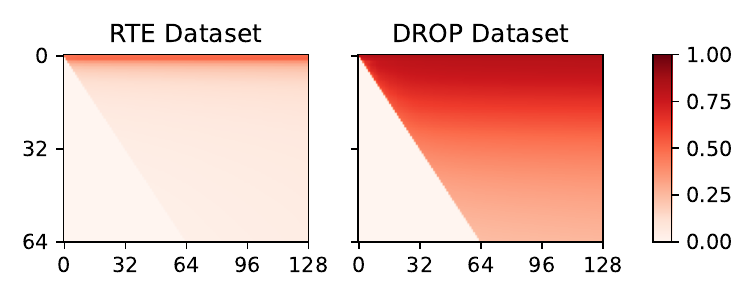}
    \caption{Subspace similarities between two LoRA experiments of different ranks (64 and 128) for two datasets. Each point $(i, j)$ represents the subspace similarity between the first $i$ right singular vectors of the $r=64$ experiment, and the first $j$ right singular vectors of the $r=128$ experiment. Only points for $i \le j$ are plotted. DROP dataset has a significantly high ``intrinsic rank'' than RTE dataset.}
    \vspace{-8pt}
    \label{fig:rank}
\end{minipage}

\vspace{30pt}

\begin{minipage}{\linewidth}
\centering
\vspace{-10pt}
\adjustbox{max width=0.9\linewidth}{
\setlength{\tabcolsep}{8pt} 
\def\arraystretch{1.25}
\begin{tabular}{ccc}
\thickhline
\multirow{2}{*}{\textbf{Model}}      & \multicolumn{2}{c}{\textbf{Accuracy/$F_1$-Score ($\uparrow$)}} \\ \cline{2-3} 
                                               & \textbf{RTE}                  & \textbf{DROP}                  \\ \hline
$\text{LLaMA2}_\text{7B}$ Base                 & 61.0                          & 19.8                           \\ \hline
$\text{LLaMA2}_\text{7B}$ LoRA$_{r=64}$        & 86.0                          & 55.2                           \\ \hline
$\text{LLaMA2}_\text{7B}$ LoRA$_{r=128}$       & 85.8                          & 56.2                           \\ \thickhline
\end{tabular}
}
\captionof{table}{Performance of base and LoRA fine-tuned LLaMA2-7B on RTE \cite{NEURIPS2019_4496bf24} and DROP \cite{dua-etal-2019-drop} datasets. We use accuracy and $F_1$-score as the metrics for them respectively.}
\vspace{-10pt}
\label{tab:base_rte_drop}
\end{minipage}

\end{wrapfigure}

LoRA operates under the hypothesis that parameter updates during fine-tuning exhibit a low ``intrinsic rank.'' For a pretrained weight matrix $W_0 \in \mathbb{R}^{d \times k}$, LoRA parameterizes the weight update as $W' = W_0 + \Delta W = W_0 + BA$, where $A \in \mathbb{R}^{r \times k}$ and $B \in \mathbb{R}^{d \times r}$ are low-rank matrices. In this configuration, only $A$ and $B$ are trainable, while $W_0$ remains fixed. Consequently, the rank of the weight update $\Delta W$ is limited to $r$.

Although the original LoRA paper shows empirical evidence to support the low-rank hypothesis, recently it has been found that this hypothesis may still fail for more complex tasks, especially for those that significantly differ from the pre-training dataset, leading to suboptimal performance \cite{biderman2024lora,jiang2024mora}. To assess the general applicability of the low-rank hypothesis, we examine two datasets of varying difficulties: the RTE dataset \cite{NEURIPS2019_4496bf24}, a classification task where the model is tasked to verify the correctness of statements, and the DROP dataset \cite{dua-etal-2019-drop}, a generation task where the model performs discrete reasoning over paragraphs. We posit that the RTE dataset is simpler, thus more likely to conform to the low-rank hypothesis, whereas the DROP dataset presents a greater challenge.

As shown in Table~\ref{tab:base_rte_drop}, the LLaMA2-7B model \cite{touvron2023llama2} in general can achieve a better score on the RTE dataset than the DROP dataset. In addition, as we increase the rank from 64 to 128, LoRA's performance on the RTE dataset remains the same, consistent with the low-rank hypothesis, while the performance on the DROP dataset improves, suggesting the DROP dataset may require a higher ``intrinsic rank.''

To further measure the ``intrinsic rank'' of weight updates for these datasets, we follow the methodology outlined in~\citep{hu2022lora} and compare the subspace spanned by the right singular vectors of the resulting weight updates between the $r=64$ and $r=128$ experiments.
Figure \ref{fig:rank} shows the subspace similarities between the query weight updates of the two ranks at layer 16 for both datasets. In the figure, each point $(i, j)$ represents the subspace similarity between the first $i$ singular vectors of the $r=64$ experiment and the first $j$ singular vectors of the $r=128$ experiment. A subspace similarity close to 1 indicates significant overlap, suggesting that the subspace is crucial for fine-tuning, while a similarity close to 0 suggests orthogonality, implying that the vectors represent noise.
For the RTE dataset, subspace similarity is large only for very small $i$ values, and quickly decays to 0 for larger $i$, indicating that fine-tuning on the RTE dataset has a low ``intrinsic rank.'' Conversely, for the DROP dataset, subspace similarity remains large across all 64 singular vectors, demonstrating a high ``intrinsic rank.'' Additional details of subspace similarity and addition data are provide in Appendix~\ref{app:subspace}

These findings demonstrate the necessity of high-rank fine-tuning in complex tasks, challenging the effectiveness of LoRA. 
This naturally prompts the following question: \emph{How can we design efficient methods to facilitate high-rank updates during fine-tuning?}

\section{Preliminary: Quantum Circuit}

The behavior of quantum mechanical systems, especially those involving particles with discrete degrees of freedom, is well described by matrix theory. Quantum circuits naturally realize unitary matrices whose sizes grow exponentially with the number of particles, providing a potent framework for high-rank representation. Here, we review some fundamental concepts of quantum states and quantum circuits to motivate our approach.

\textbf{Quantum state and vector representation.} An $N$-qubit quantum state $\ket{\psi} = \sum_{i} \psi_i \ket{i} \in \mathbb{C}^{2^N}$ is a $2^N$-dimensional complex-valued vector in Hilbert space, with $\psi_i$ the components and $\ket{i}$ the basis vectors (similar to $\mathbf{e}_i$ in vector notation). Since quantum states typically consist of qubits with local dimensions of 2, it is instructive to view the quantum state as a multi-dimensional tensor with different indices labeling different qubits: $\ket{\psi} = \psi_{i_1, i_2, \dots, i_N} \ket{i_1, i_2, \dots, i_N}$, where $i_1, i_2, \dots, i_N$ is the binary representation of $i$. This can be equivalently viewed as reshaping the quantum state from a vector in $\mathbb{C}^{2^N}$ to a tensor in $\mathbb{C}^{2 \times 2 \times \cdots \times 2}$.

\begin{wrapfigure}{r}{0.35\textwidth}
    \centering
    \vspace{-15pt}
    \includegraphics[width=\linewidth]{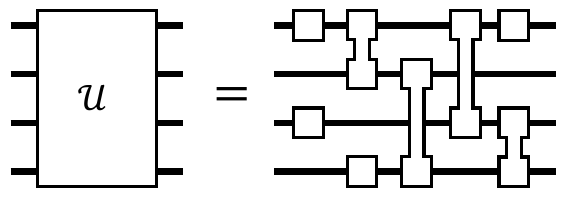}
    \caption{Any unitary matrix can be decomposed into a quantum circuit using one- and two-qubit gates.}
    \vspace{-12pt}
    \label{fig:circuit}
\end{wrapfigure}

\textbf{Quantum circuit and matrix representation.} A quantum circuit is a unitary matrix $\mathcal{U} \in \mathbb{U}(2^N) \subset \mathbb{C}^{2^N \times 2^N}$ that transforms one quantum state into another: $\ket{\phi} = \mathcal{U} \ket{\psi}$. These circuits are constructed from smaller unitary matrices known as quantum ``gates,'' which operate on one or two qubits. A one-qubit gate is a unitary matrix $U^{(1)} \in \mathbb{U}(2^1)$, while a two-qubit gate is a unitary matrix $U^{(2)} \in \mathbb{U}(2^2)$\footnote{Typically, quantum circuits and quantum gates are considered within the group $\mathbb{SU}(2^N)$. However, the groups $\mathbb{SU}(2^N)$ and $\mathbb{U}(2^N)$ differ only by a $\mathbb{U}(1)$ factor, which does not affect the results presented in this paper.}. These gates are applied to specific qubits as follows:
\begin{equation}
    U^{(1)} \ket{\psi} = \sum_{j_n} U_{i_n; j_n}^{(1)} \psi_{i_1, i_2, \dots, j_n, \dots, i_N} \ket{i_1, i_2, \dots, i_N}
\end{equation}
for a one-qubit gate applied to qubit $n$, and
\begin{equation}
    U^{(2)} \ket{\psi} = \sum_{j_m, j_n} U_{i_m, i_n; j_m, j_n}^{(2)} \psi_{i_1, i_2, \dots, j_m, \dots, j_n, \dots, i_N} \ket{i_1, i_2, \dots, i_N}
\end{equation}
for a two-qubit gate applied to qubits $m$ and $n$. (Note that $m$ and $n$ do not need to be consecutive qubits.)

A quantum circuit comprises a series of these one- and two-qubit gates $\{U^{(\alpha)}\}$ applied sequentially to the quantum state:
\begin{equation}
    \mathcal{U}\ket{\psi} = \prod_{\alpha} U^{(\alpha)} \ket{\psi}.
\end{equation}

Since quantum circuits are unitary, they inherently represent full-rank matrices in finite-dimensional systems.

\textbf{Universality of quantum circuit.} Similar to the universal approximation theorem for neural networks, it has been established that any quantum circuit on $N$ qubits can be decomposed into a quantum circuit using only one- and two-qubit gates \cite{Kitaev_1997, 10.5555/863284, Nielsen_Chuang_2010}, as shown in Figure~\ref{fig:circuit}. This is particularly relevant for reparameterization-based fine-tuning methods, where we aim to parameterize a matrix matching the shape of the base model's weight matrix using a small number of parameters.

\section{Quantum-informed Tensor Adaptation}

Since quantum circuits offer an elegant parameterization for large unitary matrices of shape $2^N \times 2^N$, by relaxing the unitarity constraint and allowing for arbitrary local dimensions, we can develop an effective tool for high-rank, parameter-efficient fine-tuning. Inspired by this, we propose \textbf{QuanTA}: \textbf{Quan}tum-informed \textbf{T}ensor \textbf{A}daptation, which parameterizes the parameter updates in a way analogous to a quantum circuit.

\textbf{Construction.} To illustrate the construction of QuanTA, we focus on the case of square weight matrices $W \in \mathbb{R}^{d \times d}$ in the main paper and defer the general case to Appendix~\ref{app:general_quanta}. In addition, we assume the hidden dimension $d$ can be decomposed as $d = d_1 \times d_2 \times \cdots \times d_N$\footnote{Each $d_n$ does not need to be prime and the decomposition does not need to be unique}. This condition is often satisfied for large language models. By reshaping $x \in \mathbb{R}^d$ to $x \in \mathbb{R}^{d_1 \times d_2 \times \cdots \times d_N}$, the hidden vector can be interpreted as a quantum state with $N$ ``qudits,'' with the $n$th axis corresponding to a qudit with local dimension $d_n$. 

Similar to a quantum circuit, QuanTA consists of ``gates'' (or tensors) that apply to only specific axes. Since single-axis gates are subsets of two-axis gates, it suffices to consider parameterizations using only two-axis gates. Let $T^{(\alpha)}$ be a tensor of shape $T^{(\alpha)} \in \mathbb{R}^{d_{m^{(\alpha)}} d_{n^{(\alpha)}} \times d_{m^{(\alpha)}} d_{n^{(\alpha)}}}$ that operates on the ${m^{(\alpha)}}$th and ${n^{(\alpha)}}$th axes with corresponding dimensions $d_{m^{(\alpha)}}$ and $d_{n^{(\alpha)}}$. Analogous to applying a two-qubit gate to a quantum state, applying this tensor to the hidden vector is defined as
\begin{equation}
    (T^{(\alpha)} x)_{i_1, \dots, i_m, \dots, i_n, \dots, i_N} := \sum_{j_m, j_n} T_{i_m, i_n; j_m, j_n}^{(\alpha)} x_{i_1, \dots, j_m, \dots, j_n, \dots, i_N},
\end{equation}
where the $\alpha$ labels are dropped for simplicity, but it should be noted that different $T^{(\alpha)}$'s can be defined on different axes.
Equivalently, this operation can be viewed as a matrix-vector multiplication with all but the ${m^{(\alpha)}}$th and ${n^{(\alpha)}}$th axes created as batch dimensions.

QuanTA is then constructed by sequentially applying a collection of such tensors $\{T^{(\alpha)}\}$ in the same manner as a quantum circuit:
\begin{equation} \label{eq:full_quanta}
    \mathcal{T} x := \prod_\alpha T^{(\alpha)} x.
\end{equation}

Although it is difficult to write the full Eq.~\eqref{eq:full_quanta} in index notation for an arbitrary set of tensors, we demonstrate in Appendix~\ref{app:einsum} that the \texttt{einsum} expression for this operation can be systematically generated. 

As a concrete example of translating Eq.~\eqref{eq:full_quanta} to index notations and \texttt{einsum}, consider the case of $N=3$; $\{T^{(\alpha)}\}$ consists of three tensors, each applied to two axes (as depicted in Fig.~\ref{fig:diagram}). In this case, it is easy to express in index notation the application of the QuanTA operator to the hidden vector;
\begin{equation} \label{eq:3_apply}
    \left(\mathcal{T} x\right)_{i_1, i_2, i_3} = \sum_{k_1, k_2} T_{i_1, i_2; k_1, k_2}^{(1)} \sum_{j_1, k_3} T_{k_1, i_3; j_1, k_3}^{(2)} \sum_{j_2, j_3} T_{k_2, k_3; j_2, j_3}^{(3)} x_{j_1, j_2, j_3}
\end{equation}
as well as the calculation of the full QuanTA matrix;
\begin{equation} \label{eq:3_full_op}
    \mathcal{T}_{i;j} = \mathcal{T}_{i_1, i_2, i_3; j_1, j_2, j_3} = \sum_{k_1, k_2} T_{i_1, i_2; k_1, k_2}^{(1)} \sum_{k_3} T_{k_1, i_3; j_1, k_3}^{(2)} T_{k_2, k_3; j_2, j_3}^{(3)}.
\end{equation}
Although Eq. \ref{eq:3_apply} and \ref{eq:3_full_op} may look complex in their formulation, in practice they can be easily implemented respectively using \texttt{einsum} as
\begin{minted}
[
frame=lines,
framesep=2mm,
baselinestretch=1.2,
bgcolor=LightGray,
fontsize=\footnotesize,
]
{python}
torch.einsum("...abc,efbc,diaf,ghde->...ghi", x, T_3, T_2, T_1)
\end{minted}
and
\begin{minted}
[
frame=lines,
framesep=2mm,
baselinestretch=1.2,
bgcolor=LightGray,
fontsize=\footnotesize,
]
{python}
torch.einsum("efbc,diaf,ghde->ghiabc", T_3, T_2, T_1)
\end{minted}

\textbf{Initialization method.} 
At initialization, the adapted model should be the same as the base model and all the weight updates should be 0. However, enforcing $\mathcal{T} x = 0$ requires setting one or more $T^{(\alpha)} = 0$, impeding gradient propagation through the tensors and negatively impacting training performance.

To address this issue, we use another set of tensors $\{S^{(\alpha)}\}$ (with the corresponding QuanTA operator $\mathcal{S}$) that are initialized to the same value as $\{T^{(\alpha)}\}$ but remain frozen throughout fine-tuning. We then define the adapted layer as
\begin{equation}
    y = W_\theta x := W_0 x + \mathcal{T}_\theta x - \mathcal{S} x,
\end{equation}
where we use the subscript $\theta$ to denote tranable paraemters. At initialization, the terms $\mathcal{T}_\theta x$ and $-\mathcal{S} x$ exactly cancel out, ensuring the adapted layer reduces to the base model.

It is important to note that this initialization method does not introduce additional costs. After initialization, the full $\mathcal{S}$ matrix can be explicitly constructed, allowing us to redefine $W_0' = W_0 + \mathcal{S}$ and simplify the adapted layer to
\begin{equation}
    y = W_\theta x = W_0' x + \mathcal{T}_\theta x.
\end{equation}

\section{Theoretical Results}
Here, we list a few important theorem and provide the proofs in Appendix~\ref{app:proof}

\begin{theorem}[Universality of QuanTA] \label{thm:uma}
Let $W$ be an arbitrary matrix of shape $2^M \times 2^M$. For any collection of local dimensions $\{d_n\}$ such that each $d_n$ is a power of 2 and $\prod_n d_n = 2^M$, it is always possible to decompose $W$ into a finite sequence of tensors $\{T^{(\alpha)}\}$, where each tensor applies on two axes with local dimensions $d_{m^{(\alpha)}}$ and $d_{n^{(\alpha)}}$.
\end{theorem}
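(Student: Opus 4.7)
The plan is to combine the quantum circuit universality theorem cited in the Preliminary with the singular value decomposition, reducing the non-unitary case to the unitary one plus a diagonal piece.

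First, I would reduce to an all-qubit setting. Since each $d_n = 2^{k_n}$ with $\sum_n k_n = M$, refine every qudit of dimension $d_n$ into $k_n$ individual qubits so that the ambient space becomes a product of $M$ qubits. A two-qudit tensor on qudits $m$ and $n$ is an arbitrary $(d_m d_n) \times (d_m d_n)$ matrix, hence contains as a special case every one- or two-qubit operator acting on qubits drawn from the refinements of $m$ and $n$ (with identity padding on the remaining refined axes within those qudits). Consequently, any finite decomposition of $W$ into one- and two-qubit tensors on $M$ qubits repackages as a finite product of two-qudit tensors of the required shapes, so it suffices to prove the theorem when all local dimensions equal $2$.

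Next, apply the singular value decomposition $W = U \Sigma V^\dagger$ with $U, V$ unitary and $\Sigma$ non-negative diagonal. By Nielsen--Chuang, both $U$ and $V^\dagger$ decompose into finite products of one- and two-qubit unitaries, so it remains only to decompose the diagonal $\Sigma$. Factor $\Sigma = \Sigma_+ \Pi$, where $\Sigma_+$ replaces each zero entry of $\Sigma$ by $1$ (and so is invertible) and $\Pi$ is the diagonal $\{0,1\}$-projector onto the support of $\Sigma$. For the invertible piece, write $\Sigma_+ = \exp(H)$ with $H$ a diagonal Hermitian, expand $H = \sum_S c_S Z^S$ in the commuting basis of tensor-product Pauli-$Z$ strings, and use $\Sigma_+ = \prod_S \exp(c_S Z^S)$; each $\exp(c_S Z^S)$ is a standard multi-qubit phase gate admitting a textbook CNOT-ladder decomposition into one- and two-qubit operations. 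For the projector $\Pi$, first apply a permutation (a finite product of SWAPs, each a two-qubit gate) to reach the canonical block form $I_r \oplus 0_{2^M - r}$, then decompose recursively on the number of qubits: the highest-order qubit either absorbs a single-qubit projector $\ket{0}\bra{0}$ when $r \le 2^{M-1}$, reducing the task to a rank-$r$ projector on $M-1$ qubits, or splits off an identity upper block and a smaller projector in the lower block, which is handled by lifting the recursive decomposition of that smaller projector term-by-term via controlled operations.

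The main obstacle will be the projector $\Pi$: being genuinely rank-deficient, it cannot be obtained as a limit of products of invertible two-qubit tensors, so an explicit finite construction is needed. The recursive peel-off above resolves this, terminating at $M=2$ where every $4\times 4$ matrix is already a single two-qubit tensor; any controlled multi-qubit operator that arises along the way further decomposes into two-qubit tensors via the same universality argument already invoked for the unitary factors. All other steps follow routinely from the SVD and the cited universality theorem.
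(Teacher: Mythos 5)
Your skeleton is the same as the paper's: reduce to qubits, take the SVD $W = U\Sigma V^\dagger$, dispatch $U$ and $V^\dagger$ by the two-qubit universality lemma, and treat the diagonal separately. Your handling of the invertible diagonal factor $\Sigma_+=\exp\bigl(\sum_S c_S Z^S\bigr)$ via conjugated single-qubit real-exponent $Z$-rotations is essentially the paper's route as well (the paper phrases it as analytic continuation of the fixed phase-rotation circuit to imaginary angles). The divergence, and the gap, is in the projector $\Pi$. A minor issue first: an arbitrary permutation of the $2^M$ diagonal entries is not a product of qubit SWAPs (those realize only the $M!$ permutations of tensor factors); you need to invoke universality for the permutation matrix as a unitary, which is fine but should be said.

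The serious problem is the $r>2^{M-1}$ branch. Lifting the decomposition of $\Pi''$ term-by-term produces controlled versions of \emph{non-unitary} tensors (e.g.\ a controlled rank-deficient two-qubit projector, an $8\times 8$ diagonal matrix of rank $7$), and "the same universality argument" does not apply to these because they are not unitary. Worse, this step cannot be repaired: by the paper's own rank representation theorem, a product of two-qubit tensors on $m\ge 3$ qubits has rank at most $2^{m-2}\min_\alpha R^{(\alpha)}$, so achieving rank $2^m-1$ forces every $R^{(\alpha)}=4$, whereupon the lower bound forces the product to be full rank --- a contradiction. Hence a rank-$(2^m-1)$ matrix (such as $C(\Pi_2)$ above, or indeed $W$ itself of rank $2^M-1$ with all $d_n=2$) admits no such decomposition at all. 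Your instinct that the projector is "the main obstacle" is exactly right, but the obstacle is an obstruction, not a construction problem; the recursion necessarily dead-ends. (The paper's proof quietly sidesteps this by analytically continuing $e^{i\theta_k}$, which has no finite solution when a diagonal entry is $0$, so the singular case is genuinely outside the reach of both arguments; the theorem as stated should be read as applying to matrices with no vanishing singular values, or the singular values should be perturbed away from zero.)
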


We note that the fine-tuning method KronA~\cite{edalati2022krona} can be incorporated into our framework and considered as a special case of QuanTA.

\begin{theorem}[Rank representation]
Let $R = r(\mathcal{T})$ be the rank of the full QuanTA operator, $R^{(\alpha)} = r(T^{(\alpha)})$ be the rank of individual tensors, $d$ be the total dimension of $\mathcal{T}$, $d^{(\alpha)} = d_{m^{(\alpha)}} d_{n^{(\alpha)}}$ be the total dimension of the individual tensor $T^{(\alpha)}$, and $N_T$ be the total number of tensors. The following inequality always holds
\begin{equation}
    \sum_{\alpha}\frac{d R^{(\alpha)}}{d^{(\alpha)}} - d(N_T - 1) \le R \le \min_\alpha{\frac{d R^{(\alpha)}}{d^{(\alpha)}}}.
\end{equation}
\end{theorem}
In the special case when all the tensors are full rank ($R^{{\alpha}} = d^{(\alpha)}$ for all $\alpha$), the full QuanTA operator is also full rank ($R = d$).

\begin{theorem}[Composition openness] \label{cor:wc}
There exists a set $\mathbb{S} = \{\mathcal{M}_k\}$ of matrices generated from a fixed QuanTA structure and two matrices $\mathcal{M}_1,\mathcal{M}_2 \in \mathbb{S}$ such that $\mathcal{M}_1 \mathcal{M}_2 \not\in \mathbb{S}$.
\end{theorem}

We note that the composition openness condition is not satisfied by low-rank matrix decomposition because, for any two low-rank matrices of the same rank, their composition remains of the same rank. While LoRA may mitigate this limitation by introducing nonlinearity, its expressivity is still constrained by closure under composition. In contrast, QuanTA satisfies the composition openness condition even in the absence of nonlinearity, which suggests that its expressivity can continue to grow as the depth of the neural network increases, even if the network is nearly linear.

\textbf{No inference overhead}. As reparameterization-based methods, QuanTA does not impose any inference latency, since the trained $\mathcal{T}$ operator can be explicitly constructed as a matrix and merged into the base model weight matrix.

\textbf{Memory and computational complexity during fine-tuning.} In the forward pass, only a hidden vector of size $d$ is kept in the memory as we sequentially apply the tensors to it. Each tensor operation can be viewed as a batched matrix-vector multiplication and has a computational complexity of $d \cdot d_m d_n$ for tensor applying on the $m$th and $n$th axes, so the total computational complexity for a QuanTA layer is $d\cdot \sum_{\alpha} d_{m^{(\alpha)}} d_{n^{(\alpha)}}$. In addition, each tensor contains $(d_m d_n)^2$ elements. Therefore, each QuanTA layer contains $\sum_{\alpha} (d_{m^{(\alpha)}} d_{n^{(\alpha)}})^2$ trainable parameters that need to be stored in the optimizer. As an illustrative example, suppose $d_m = d^{1/N}$ for all $m$ and there is one tensor for every two axes, the computational complexity can be simplified to $N(N-1)/2 \cdot d^{1 + 2/N}$, and the parameter count becomes $N(N-1)/2 \cdot d^{4/N}$. When $N=2$, QuanTA reduces to full fine-tuning.

\section{Experiments}
To benchmark QuanTA against other fine-tuning methods, we performed experiments on a wide range of datasets (see Appendix~\ref{app:data} for details). For all experiments, we avoid optimizing the hyperparameters on the test set. Instead, we create a validation set from the train set and optimize the hyperparameters on the validation set. All the results reported in this section are averaged over multiple experiments with varying random seeds, and the term ``parameters'' and ``\# params'' in this section always refer to the trainable parameters. Details on the experiments and hyperparameters are shown in Appendix~\ref{app:hyperparameter}.

\begin{figure}[H]
    \centering
    \begin{minipage}{0.55\textwidth}
        \centering
        \adjustbox{max width=\linewidth}{
        \setlength{\tabcolsep}{6pt} 
        \def\arraystretch{1.2}
        \begin{tabular}{cccc}
        \thickhline
        \textbf{Model}                              & \textbf{PEFT Method}   & \textbf{\# Params (\%)} & \textbf{$F_1$ Score ($\uparrow$)} \\ \hline
        \multirow{8}{*}{$\text{LLaMA2}_\text{7B}$}  & FT                     & 100\%                   & 59.4                              \\
                                                    & Series                 & 0.747\%                 & 58.8                              \\
                                                    & Parallel               & 0.747\%                 & 59.0                              \\
                                                    & LoRA$_{r=8}$           & 0.062\%                 & 54.0                              \\
                                                    & LoRA$_{r=32}$          & 0.249\%                 & 54.8                              \\
                                                    & LoRA$_{r=128}$         & 0.996\%                 & 56.2                              \\
                                                    & \textbf{QuanTA}$_{16\textrm{-}8\textrm{-}8\textrm{-}4}$ \textbf{(Ours)} & 0.041\%                 & 59.5                              \\
                                                    & \textbf{QuanTA}$_{16\textrm{-}16\textrm{-}16}$ \textbf{(Ours)} & 0.261\%                 & \textbf{59.6}                     \\ \hline
        \multirow{2}{*}{$\text{LLaMA2}_\text{13B}$} & LoRA$_{r=8}$           & 0.050\%                 & 61.0                              \\
                                                    & \textbf{QuanTA}$_{16\textrm{-}8\textrm{-}8\textrm{-}5}$ \textbf{(Ours)} & 0.029\%                 & \textbf{69.0}                     \\ \hline
        \multirow{2}{*}{$\text{LLaMA2}_\text{70B}$} & LoRA$_{r=8}$           & 0.024\%                 & 74.3                              \\
                                                    & \textbf{QuanTA}$_{16\textrm{-}8\textrm{-}8\textrm{-}8}$ \textbf{(Ours)} & 0.014\%                 & \textbf{79.4}                     \\ \thickhline
        \end{tabular}
        }
        \captionof{table}{Benchmark of various fine-tuning methods on the DROP dataset using LLaMA2 7-70 billion parameter models as the base model. In each case, we report the average of $F_1$ score over 2-4 experiments with different random seeds.}
        \label{tab:drop}
    \end{minipage}\hfill
    \begin{minipage}{0.43\textwidth}
        \centering
        \vspace{-5pt}
        \includegraphics[width=\linewidth]{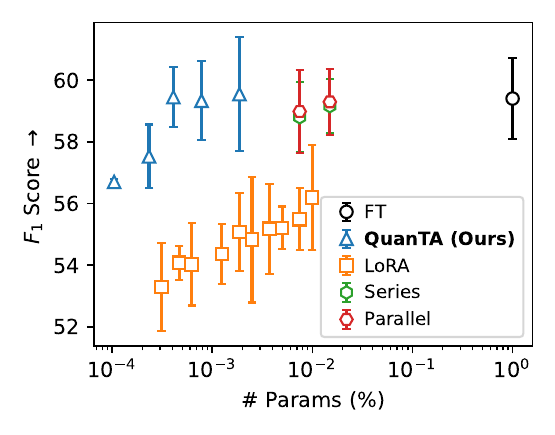}
        \vspace{-18pt}
        \caption{Benchmark of different fine-tuning methods on the DROP dataset as a function of training parameters using LLaMA2 7 billion parameter model as the base model.}
        \label{fig:drop}
    \end{minipage}
\end{figure}

\textbf{DROP Dataset}. We begin our benchmark with the DROP dataset \cite{dua-etal-2019-drop}, chosen as a representative example that requires high-rank fine-tuning. In Table~\ref{tab:drop}, we compare our QuanTA method with LoRA of different ranks, as well as series and parallel adapters, by fine-tuning LLaMA2 \cite{touvron2023llama2} with up to 70 billion parameters.

As shown in Table~\ref{tab:drop}, LoRA consistently underperforms compared to other fine-tuning methods. While increasing the rank improves performance, LoRA still falls short, suggesting the necessity of high-rank fine-tuning for this task. In addition, QuanTA achieves performance on par with, or better than, full fine-tuning using only a a small fraction of the parameters, demonstrating the effectiveness of QuanTA's high-rank fine-tuning capability.

To investigate how these methods scale with the number of trainable parameters, we conducted experiments varying the number of trainable parameters on LLaMA2-7B model. The results are shown in Fig.~\ref{fig:drop}. Each point in the figure represents an average of four experiments with different random seeds, and the standard deviation across these experiments is shown as error bars \footnote{We vary both the random seed for model initialization and the sampled train, dev, and test datasets, which could be the reason of large standard deviations.}. 

As illustrated in the figure, QuanTA achieves performance comparable to or better than full fine-tuning using a small fraction of trainable parameters. Conversely, LoRA only achieves subpar performance with a small number of trainable parameters, though its performance improves with an increase in parameters. Other PEFT methods, such as series and parallel adapters, achieve results close to full fine-tuning but use significantly more parameters than QuanTA.

\begin{table}[t]
\centering
\adjustbox{max width=\linewidth}{
\setlength{\tabcolsep}{4pt} 
\def\arraystretch{1.25}
\begin{tabular}{cccccccccccc}
\thickhline
\multirow{2}{*}{\textbf{Model}}             & \multirow{2}{*}{\textbf{PEFT Method}} & \multirow{2}{*}{\textbf{\# Params (\%)}} & \multicolumn{9}{c}{\textbf{Accuracy ($\uparrow$)}}                                                                                                    \\ \cline{4-12} 
                                            &                                       &                                          & \textbf{BoolQ} & \textbf{PIQA} & \textbf{SIQA} & \textbf{HellaS.} & \textbf{WinoG.} & \textbf{ARC-e} & \textbf{ARC-c} & \textbf{OBQA} & \textbf{Avg.} \\ \hline
$\text{GPT-3}_\text{175B}$*                 & --                                    & --                                       & 60.5           & 81.0          & --            & 78.9             & 70.2            & 68.8           & 51.4           & 57.6          & --            \\
$\text{PaLM}_\text{540B}$*                  & --                                    & --                                       & 88.0           & 82.3          & --            & 83.4             & 81.1            & 76.6           & 53.0           & 53.4          & --            \\
$\text{ChatGPT}$*                           & --                                    & --                                       & 73.1           & 85.4          & 68.5          & 78.5             & 66.1            & 89.8           & 79.9           & 74.8          & 77.0          \\ \hline
\multirow{8}{*}{$\text{LLaMA}_\text{7B}$}   & FT                                    & 100\%                                    & 71.3           & 82.1          & 78.6          & 90.2             & 79.0            & 82.9           & 67.2           & 76.8          & 78.5          \\
                                            & Prefix*                               & 0.11\%                                   & 64.3           & 76.8          & 73.9          & 42.1             & 72.1            & 72.9           & 54.0           & 60.6          & 64.6          \\
                                            & Series*                               & 0.99\%                                   & 63.0           & 79.2          & 76.3          & 67.9             & 75.7            & 74.5           & 57.1           & 72.4          & 70.8          \\
                                            & Parallel*                             & 3.54\%                                   & 67.9           & 76.4          & 78.8          & 69.8             & 78.9            & 73.7           & 57.3           & 75.2          & 72.3          \\
                                            & LoRA*                                 & 0.83\%                                   & 68.9           & 80.7          & 77.4          & 78.1             & 78.8            & 77.8           & 61.3           & 74.8          & 74.7          \\
                                            & DoRA$^\dagger$                        & 0.43\%                                   & 70.0           & 82.6          & \textbf{79.7} & 83.2             & 80.6            & 80.6           & 65.4           & 77.6          & 77.5          \\
                                            & DoRA$^\dagger$                        & 0.84\%                                   & 69.7           & \textbf{83.4} & 78.6          & 87.2             & 81.0            & 81.9           & 66.2           & 79.2          & 78.4          \\
                                            & \textbf{QuanTA (Ours)}                & 0.041\%                                  & \textbf{71.6}  & 83.0          & \textbf{79.7} & \textbf{91.8}    & \textbf{81.8}   & \textbf{84.0}  & \textbf{68.3}  & \textbf{82.1} & \textbf{80.3} \\ \hline
\multirow{7}{*}{$\text{LLaMA}_\text{13B}$}  & Prefix*                               & 0.03\%                                   & 65.3           & 75.4          & 72.1          & 55.2             & 68.6            & 79.5           & 62.9           & 68.0          & 68.4          \\
                                            & Series*                               & 0.80\%                                   & 71.8           & 83.0          & 79.2          & 88.1             & 82.4            & 82.5           & 67.3           & 81.8          & 79.5          \\
                                            & Parallel*                             & 2.89\%                                   & 72.5           & 84.8          & 79.8          & 92.1             & 84.7            & 84.2           & 71.2           & 82.4          & 81.5          \\
                                            & LoRA*                                 & 0.67\%                                   & 72.1           & 83.5          & 80.5          & 90.5             & 83.7            & 82.8           & 68.3           & 82.4          & 80.5          \\
                                            & DoRA$^\dagger$                        & 0.35\%                                   & 72.5           & 85.3 & 79.9          & 90.1             & 82.9            & 82.7           & 69.7           & 83.6          & 80.8          \\
                                            & DoRA$^\dagger$                        & 0.68\%                                   & 72.4           & 84.9          & 81.5          & 92.4             & 84.2            & 84.2           & 69.6           & 82.8          & 81.5          \\
                                            & \textbf{QuanTA (Ours)}                & 0.029\%                                  & \textbf{73.2}  & \textbf{85.4}          & \textbf{82.1} & \textbf{93.4}    & \textbf{85.1}   & \textbf{87.8}  & \textbf{73.3}  & \textbf{84.4} & \textbf{83.1} \\ \hline
\multirow{5}{*}{$\text{LLaMA2}_\text{7B}$}  & FT                                    & 100\%                                    & \textbf{72.9}  & 83.0          & 79.8          & 92.4             & 83.0            & \textbf{86.6}  & 72.0           & 80.1          & 81.2          \\
                                            & LoRA$^\dagger$                        & 0.83\%                                   & 69.8           & 79.9          & 79.5          & 83.6             & 82.6            & 79.8           & 64.7           & 81.0          & 77.6          \\
                                            & DoRA$^\dagger$                        & 0.43\%                                   & 72.0           & 83.1          & \textbf{79.9} & 89.1             & 83.0            & 84.5           & 71.0           & 81.2          & 80.5          \\
                                            & DoRA$^\dagger$                        & 0.84\%                                   & 71.8           & 83.7          & 76.0          & 89.1             & 82.6            & 83.7           & 68.2           & 82.4          & 79.7          \\
                                            & \textbf{QuanTA (Ours)}                & 0.041\%                                  & 72.4           & \textbf{83.8} & 79.7          & \textbf{92.5}    & \textbf{83.9}   & 85.3           & \textbf{72.5}  & \textbf{82.6} & \textbf{81.6} \\ \hline
\multirow{2}{*}{$\text{LLaMA2}_\text{13B}$} & LoRA                                  & 0.67\%                                   & 73.3           & 85.6          & 80.8          & 91.6             & 85.5            & 84.2           & 73.7           & 83.3          & 82.3          \\
                                            & \textbf{QuanTA (Ours)}                & 0.029\%                                  & \textbf{75.8}  & \textbf{86.9} & \textbf{81.2} & \textbf{94.4}    & \textbf{87.0}   & \textbf{89.6}  & \textbf{77.9}  & \textbf{85.2} & \textbf{84.8} \\ \hline
$\text{LLaMA3}_\text{8B}$                   & LoRA$^\dagger$                        & 0.70\%                                   & 70.8           & 85.2          & 79.9          & 91.7             & 84.3            & 84.2           & 71.2           & 79.0          & 80.8          \\
                                            & DoRA$^\dagger$                        & 0.35\%                                   & 74.5           & 88.8          & 80.3          & \textbf{95.5}    & 84.7            & 90.1           & 79.1           & \textbf{87.2} & 85.0          \\
                                            & DoRA$^\dagger$                        & 0.71\%                                   & \textbf{74.6}  & \textbf{89.3} & 79.9          & \textbf{95.5}    & 85.6            & 90.5           & 80.4           & 85.8          & 85.2          \\
                                            & \textbf{QuanTA (Ours)}                & 0.035\%                                  & 74.3           & 88.1          & \textbf{81.8} & 95.1             & \textbf{87.3}   & \textbf{91.1}  & \textbf{81.7}  & \textbf{87.2} & \textbf{85.8} \\ \thickhline
\end{tabular}
}
\vspace{5pt}
\caption{Benchmark on various commonsense reasoning tasks. All results of models and PEFT methods labeled with ``*'' are from~\citep{hu-etal-2023-llm}, and results with ``$^\dagger$'' are from~\citep{liu2024dora}.}
\vspace{-20pt}
\label{tab:commonsense}

\end{table}

\textbf{Commonsense Reasoning.} We continue to evaluate our method on a collection of commonsense reasoning datasets. Following the methodology in~\citep{hu-etal-2023-llm}, we first fine-tune the model on the \textsc{Commonsense}170K dataset \cite{hu-etal-2023-llm}, a comprehensive collection of commonsense reasoning questions, and subsequently evaluate it on eight different downstream tasks.

In Table~\ref{tab:commonsense}, we benchmark our QuanTA method against other fine-tuning techniques using 7- and 13-billion-parameter LLaMA and LLaMA2 models, as well as the 8-billion-parameter LLaMA3 model. Alongside prefix tuning, adapter methods, and LoRA, we also compare our approach to the recently proposed LoRA variant, the DoRA method \cite{liu2024dora}. The results clearly indicate that our QuanTA method outperforms LoRA in all cases and surpasses the DoRA method in most benchmarks, using less than one-tenth of the parameters.

\textbf{Arithmetic Reasoning}. We further test our method on arithmetic reasoning tasks by fine-tuning the model on \textsc{Math}10K dataset \cite{hu-etal-2023-llm} and assessing its performance on four tasks. We note that while~\citep{hu-etal-2023-llm} includes additional downstream tasks in the arithmetic reasoning benchmark, some test data was later found to have leaked into the training dataset. In this study, we only benchmark the four downstream tasks unaffected by this data leakage. Additionally, our evaluation procedure differs slightly from that in~\citep{hu-etal-2023-llm} (see Appendix~\ref{app:hyperparameter} for details).

Table~\ref{tab:math} presents the evaluation results on the four downstream tasks. Notably, all questions in the AQuA dataset are multiple-choice with mostly five options, and all models except GPT-3.5 failed to achieve accuracy higher than 20\%. Therefore, we conclude that all models perform equally poorly on this task and exclude it from the average accuracy computation. This phenomenon is also consistent with previous findings \cite{hu-etal-2023-llm,liu2024dora}. The results show that QuanTA significantly outperforms LoRA and even surpasses full fine-tuning with a small number of parameters. It is surprising that QuanTA exceeds full fine-tuning in these tasks, which may be due to overfitting or the challenges of optimizing hyperparameters for full fine-tuning.

In Appendix~\ref{app:add_bench}, we include benchmarks with additional fine-tuning methods and on additional datasets.

\textbf{Limitations.} QuanTA currently requires applying the tensors sequentially to the hidden vectors, which may result in underutilizing the GPU when the tensors are too small. It will be helpful to develop a more efficient implementation to fully utilize GPU resources. The hyperparameters in QuanTA, such as the number of tensors applying on the same axes, have not been optimized. Choosing an optimal set of tensors could further enhance the performance of QuanTA. In the current experiments, we only consider LLaMA model series and a thorough study on different models will be beneficial if more computational resources are available.

\begin{table}[t]
\centering
\adjustbox{max width=\linewidth}{
\setlength{\tabcolsep}{8pt} 
\def\arraystretch{1.25}
\begin{tabular}{cccccccc}
\thickhline
\multirow{2}{*}{\textbf{Model}}             & \multirow{2}{*}{\textbf{PEFT Method}} & \multirow{2}{*}{\textbf{\# Params (\%)}} & \multicolumn{5}{c}{\textbf{Accuracy ($\uparrow$)}}                                        \\ \cline{4-8} 
                                            &                                       &                                          & \textbf{AQuA} & \textbf{GSM8K} & \textbf{MAWPS} & \textbf{SVAMP} & \textbf{Avg. W/O AQuA} \\ \hline
$\text{GPT-3.5}_\text{175B}$*               & --                                    & --                                       & 38.9          & 56.4           & 87.4           & 69.6           & 71.1                   \\ \hline
$\text{LLaMA2}_\text{7B}$                   & FT                                    & 100\%                                    & \textit{19.3} & 65.2           & 92.0           & 80.7           & 79.3                   \\
                                            & LoRA                                  & 0.83\%                                   & \textit{17.5} & 65.7           & 91.2           & \textbf{80.8}  & 79.6                   \\
                                            & \textbf{QuanTA (Ours)}                & 0.19\%                                   & \textit{16.7} & \textbf{67.0}  & \textbf{94.3}  & 80.3           & \textbf{80.5}          \\ \hline
\multirow{2}{*}{$\text{LLaMA2}_\text{13B}$} & LoRA                                  & 0.67\%                                   & \textit{16.7} & 72.3           & 90.8           & 84.3           & 82.5                   \\
                                            & \textbf{QuanTA (Ours)}                & 0.13\%                                   & \textit{18.9} & \textbf{72.4}  & \textbf{94.5}  & \textbf{84.8}  & \textbf{83.9}          \\ \thickhline
\end{tabular}
}
\vspace{5pt}
\caption{Benchmark on various arithmetic reasoning tasks. GPT-3.5 (labeled with ``*'') results are taken from~\citep{hu-etal-2023-llm}.}
\vspace{-10pt}
\label{tab:math}

\end{table}

\section{Conclusion}

In this paper, we introduced QuanTA, a novel, easy-to-implement, PEFT method with no inference overhead for large language models. QuanTA leverages quantum-inspired techniques to achieve high-rank adaptations, addressing the limitations of existing low-rank methods. QuanTA introduces high-rank fine-tuning through the universality theorem and rank representation theorem. Our extensive experiments demonstrate the efficacy of QuanTA across various tasks, including commonsense reasoning, arithmetic reasoning, and scalability. QuanTA consistently outperforms traditional fine-tuning methods and other PEFT approaches, achieving superior performance with a significantly smaller number of trainable parameters. This highlights the potential of quantum-informed techniques in enhancing the adaptability and efficiency of large language models. 

QuanTA offers a scalable and efficient solution for fine-tuning large language models, advancing the state-of-the-art in natural language processing. There are several promising directions for future research and development of QuanTA. Expanding its application to a wider range of tasks and specialized domains could demonstrate its versatility and robustness. Combining QuanTA with other PEFT methods or incorporating it into ensemble models might further enhance performance, particularly for complex tasks. The parameter efficiency of QuanTA may also imply a lower chance of overfitting. Additionally, exploring advanced optimization techniques tailored specifically for QuanTA could improve convergence rates and overall efficiency. Further design based on principles from quantum computing, such as entanglement and superposition, may lead to even more efficient fine-tuning methods. Our work paves the way for further exploration of quantum-informed methods or even future quantum technologies for machine learning, making it a valuable approach for both research and practical applications with broader impacts.

\section*{Broader Impacts} \label{broader_impact}

The development of QuanTA represents an important advancement in the fine-tuning of LLMs, with profound societal implications. By leveraging quantum-informed methods, QuanTA reduces computational and memory demands, making advanced NLP capabilities more accessible and cost-effective. This democratization of AI technology can facilitate its adoption in resource-constrained environments, bridging technological disparities. Additionally, the integration of quantum techniques could spark interdisciplinary innovations, enhancing healthcare diagnostics, financial risk assessment, and personalized education. Furthermore, QuanTA's efficiency aligns with global sustainability efforts by reducing the energy consumption associated with AI training, contributing to the reduction of AI's carbon footprint. Thus, QuanTA not only advances NLP but also promotes inclusive, sustainable, and impactful AI technologies across various sectors. However, the deployment of such powerful AI models raises concerns about data privacy, security, and the potential misuse of AI technologies. Addressing these ethical and societal challenges is crucial to ensure that the benefits of QuanTA are realized responsibly and equitably.

\section*{Acknowledgements}

The authors acknowledge support from the National Science Foundation under Cooperative Agreement PHY-2019786 (The NSF AI Institute for Artificial Intelligence and Fundamental Interactions, \url{http://iaifi.org/}). This material is based upon work supported by the U.S. Department of Energy, Office of Science, National Quantum Information Science Research Centers, Co-design Center for Quantum Advantage (C2QA) under contract number DE-SC0012704. The research was sponsored by the United States Air Force Research Laboratory and the Department of the Air Force Artificial Intelligence Accelerator and was accomplished under Cooperative Agreement Number FA8750-19-2-1000. The computations in this paper were run on the FASRC cluster supported by the FAS Division of Science Research Computing Group at Harvard University.

\bibliographystyle{unsrt}
\bibliography{reference}

\newpage
\appendix
\renewcommand\theequation{\thesection.\arabic{equation}}
\setcounter{equation}{0}
\renewcommand\thefigure{\thesection.\arabic{figure}}
\setcounter{figure}{0}
\renewcommand\thetable{\thesection.\arabic{table}}
\setcounter{table}{0}

\onecolumn

{\textbf{\huge Appendix}}
\bigskip

\section{Additional Details on Subspace Similarity} \label{app:subspace}
In the main paper, we use the subspace similarity to measure the ``intrinsic rank'' of fine-tuning on a specific dataset. In this section, we provide more details on it.

While it is tempting to compute the rank by performing singular value decomposition (SVD) on the weight matrices of fully fine-tuned models, such measurement generally overestimates the intrinsic rank due to random parameter updates during fine-tuning that are irrelevant to the performance on the downstream tasks. The authors of ~\citep{hu2022lora} proposes a better way to measure the intrinsic rank, which we describe as follows.

First, we run LoRA fine-tuning for two different ranks $r_1$ and $r_2$ and obtain the LoRA weight updates $\Delta W^{(r_1)} = B^{(r_1)} A^{(r_1)}$ and $\Delta W^{(r_2)} = B^{(r_2)} A^{(r_2)}$. Then, we perform singular value decompositions on the weights to obtain $\Delta W^{(r)} = U^{(r)} S^{(r)} {V^{(r)}}^{\top}$. Next, let's denote the first $i$ right singular vectors of $\Delta W^{(r_1)}$ (first $i$ columns of $V^{(r_1)}$ as $V^{(r_1, i)}$) and the first $j$ right singular vectors of $\Delta W^{(r_2)}$ (first $j$ columns of $V^{(r_2)}$) as denoted as $V^{(r_2, j)}$. The subspace similarity between these two subspace is defined as
\begin{equation}
    \phi(r_1, r_2, i, j) = \frac{\norm{{V^{(r_1, i)}}^{\top} V^{(r_2, j)}}_F^2}{\min(i, j)} \in [0, 1].
\end{equation}
This function equals 1 if any subspace can be contained in the other, equals 0 if the two subspaces are orthogonal, and in general measures the overlap between 0 and 1 between the two subspaces. FOr fine-tuning that has a low ``intrinsic rank''; only the subspace spanned by the first few singular vectors (that correspond to the largest few singular vectors) should be similar, with the rest nearly perpendicular originating from random noise during fine-tuning. Thus, the subspace similarity should be close to 1 only when either $i$ or $j$ is small, and quickly decays to 0 for large $i$ and $j$. On the other hand, when the ``intrinsic rank'' is high, all the singular vectors in one subspace can be important and therefore would appear in the other. In this case, the subspace similarity can remain high for all values of $i$ and $j$.

In the main paper, we choose $r_1=64$ and $r_2=128$, and measure the subspace similarity for both the RTE dataset \cite{NEURIPS2019_4496bf24} and the DROP dataset \cite{dua-etal-2019-drop}, and reported the values corresponding to the query weight matrix of the 16th layer. In this section, we include results corresponding to additional weight matrices. In Fig.~\ref{fig:rank16v} and \ref{fig:rank23v}, we show the subspace similarities for the value weight matrix at layer 16 and 23. We observe that the similar behaviors appear for these two weight matrices as in the main paper, where the RTE dataset exhibits a low ``intrinsic-rank'', while the DROP dataset has a high ``intrinsic-rank''.
\begin{figure}[ht]
    \centering
    \includegraphics[width=0.6\linewidth]{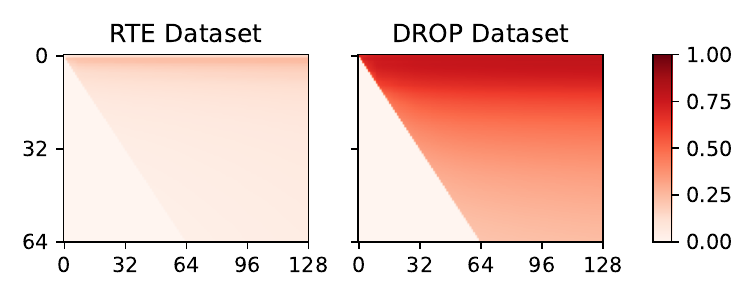}
    \caption{Subspace similarities between two LoRA experiments of different ranks (64 and 128) for two datasets for the value weight matrix at layer 16.}
    \label{fig:rank16v}
\end{figure}

\begin{figure}[ht]
    \centering
    \includegraphics[width=0.6\linewidth]{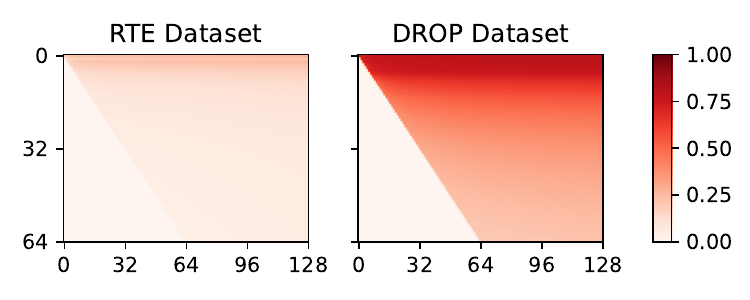}
    \caption{Subspace similarities between two LoRA experiments of different ranks (64 and 128) for two datasets for the value weight matrix at layer 23.}
    \label{fig:rank23v}
\end{figure}

\section{Constructing General QuanTA Operators}  \label{app:general_quanta}

In the main paper, we discussed QuanTA operation when the layer weight is square $W_0 \in \mathbb{R}^{d\times d}$ and we assumed $d$ to be a composite number that can be decomposed into $d=d_1\times d_2\times \cdots \times d_N$. Here, we will consider more general cases.

Let's first consider general rectangular matrices $W_0 \in \mathbb{R}^{d \times k}$ but still assume $d$ and $k$ to permit decompositions in the form $d=d_1\times d_2\times \cdots \times d_N$ and $k=k_1\times k_2\times \cdots \times k_N$, without loss of generality, let's assume $d\ge k$ (transpose the matrix if $d < k$). In addition, assume $d$ and $k$ has a simple ratio and let $d_1/k_1 = d/k$. Note that this requirement may seem strict, but in many cases, $d$ can be as simple as a multiple of $k$. (For example, LLaMA2-70B model contains many $1024 \times 8192$ and $8192\times 1024$ weight matrices, in which case $d/k = 8/1$). Then, we can set $d_n = k_n$ for all $n > 1$. Among the collection of tensors, let's assume $T^{\alpha} \in \mathbb{R}^{d_1 d_n \times k_1 d_n}$ ($d_n = k_n$) is a ``rectangular'' tensor that applies on the first and $n$th axes. After applying this tensor, the hidden vector changes shape from $\mathbb{R}^{d_1\times d_2 \times \cdots \times d_N}$ to $\mathbb{R}^{k_1\times d_2 \times \cdots \times d_N}$, making the hidden vector into the correct size. Then, one just needs to make sure that all the tensors subsequent to this tensor needs to have the correct shape when applying to the first axis. A pictorial representation is shown in Fig.~\ref{fig:rect}. 

\begin{figure}[ht]
    \centering
    \includegraphics[width=0.4\linewidth]{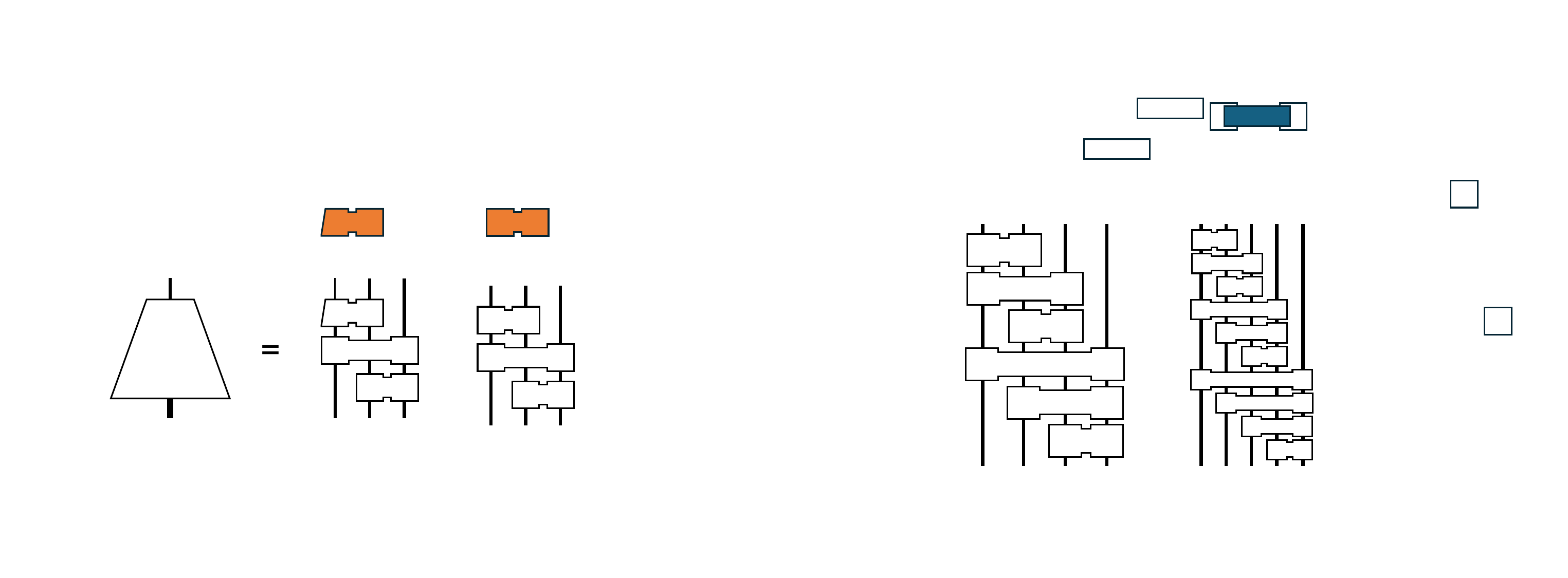}
    \caption{Illustration of how to parameterize rectangular matrix with QuanTA}
    \label{fig:rect}
\end{figure}

In the special case where the ``rectangular'' tensor is the last tensor applied to the hidden vector, this operation can be equivalently written as applying a ``square'' tensor of shape $\mathbb{R}^{d_1 d_n \times d_1 d_n}$, resulting in an output hidden vector of shape $y \in \mathbb{R}^{d_1\times d_2 \times \cdots \times d_N} \simeq \mathbb{R}^{d}$ and then slicing the first $k$ elements from it. In the case when $d < k$, this operation needs to be reversed, and it can be expressed as first padding the hidden vector to appropriate size, before applying the tensor circuit.

More generally, it is possible to choose any set of $\{d_n\}_{n=1}^N$ and $\{k_n\}_{n=1}^N$ (where there products may or may not equal to $d$ and $k$), as well as any of tensors that transforms $\mathbb{R}^{d_1\times d_2 \times \cdots \times d_N} \rightarrow  \mathbb{R}^{k_1\times k_2 \times \cdots \times k_N}$. Then, one can always truncate or pad the input vector of size $d$ to length $\prod_{n=1}^N d_n$, and the output vector from size $\prod_{n=1}^N k_n$ to $k$. We note that while this method will always work, it is recommended to choose $\{d_n\}_{n=1}^N$ and $\{k_n\}_{n=1}^N$ such that there products are close to $d$ and $k$, to achieve the best performance and avoid unnecessary cost. We furhter note that having $d \ne \prod_{n=1}^N d_n$ and $k \ne \prod_{n=1}^N k_n$ still allows the full QuanTA operator $\mathcal{T}$ to be merged into the original weight matrix, by padding and truncating the $\mathcal{T}$ into $\mathbb{R}^{d \times k}$.

\section{Additional Theoretical Results and Proofs}  \label{app:proof}

In the main paper, we have listed a few important theorems. In this section, we provide the proof for these theorems. We will first need the following lemma, which is fundamental to modern quantum computation. The proof of this lemma can be found in any modern quantum computation textbook such as Ref.~\citep{Nielsen_Chuang_2010} or in Ref.~\citep{DiVincenzo_1995,brylinski2001universal}.

\begin{lemma}[Universality of two-qubit gates] \label{lem:qubit}
    Any $2^M\times 2^M$ unitary matrix can be written as a quantum circuit using just two-qubit gates of size $4\times4$.
\end{lemma}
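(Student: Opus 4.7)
The plan is to follow the standard three-stage proof from the quantum computation literature. First, decompose an arbitrary $U \in \mathbb{U}(2^M)$ into a product of two-level unitaries---unitaries acting nontrivially only on a two-dimensional subspace spanned by two computational basis states $\ket{i}$ and $\ket{j}$. Second, realize each two-level unitary as a multiply-controlled single-qubit gate after a basis relabelling. Third, decompose the multi-controlled gate into CNOT gates and single-qubit gates, both of which embed into the class of $4 \times 4$ two-qubit unitaries (single-qubit gates become tensor products with the identity on the other qubit, which lies in $\mathbb{U}(2^2)$).

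For the first stage, I would perform a Givens-rotation-style triangularization. Starting from $U$, I iteratively choose a two-level unitary $V_k$ that zeros a specified sub-diagonal entry of the running product $V_{k-1} \cdots V_1 U$, sweeping column by column. Unitarity forces the resulting upper-triangular matrix to be diagonal with unit-modulus phase entries, and a final batch of diagonal two-level phase gates sends it to the identity. Inverting yields $U$ as a product of at most $2^M(2^M-1)/2$ two-level unitaries. For the second stage, given a two-level unitary $V$ acting on basis states $\ket{s}$ and $\ket{t}$, I would construct a Gray-code path $s = g_0, g_1, \ldots, g_L = t$ whose consecutive bitstrings differ in exactly one bit, then use a cascade of controlled-NOT-type operations to conjugate the pair $(\ket{s}, \ket{t})$ into a pair differing in only one bit. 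This transforms $V$ into a single-qubit gate on that bit, fully controlled on the remaining $M-1$ qubits.

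The main obstacle is the residual multi-controlled gate: an $(M-1)$-fold-controlled single-qubit unitary is not itself a two-qubit gate, so simply declaring it a primitive would be cheating. The standard resolution is an inductive construction that reduces any $k$-fold-controlled one-qubit gate to $O(k)$ Toffoli gates and $(k-1)$-fold-controlled gates, with the Toffoli in turn admitting an explicit six-CNOT realization together with a handful of single-qubit rotations. I would handle this step by invoking the explicit constructions available in Ref.~\citep{Nielsen_Chuang_2010,DiVincenzo_1995,brylinski2001universal} rather than writing out the full circuits, since the lemma concerns only existence and not gate-count optimality. Once every primitive is either a single-qubit gate or a CNOT, embedding into $\mathbb{U}(4)$ completes the decomposition and establishes the lemma.
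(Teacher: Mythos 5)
Your proposal is correct and follows exactly the standard three-stage argument (two-level unitaries via Givens-style reduction, Gray-code conjugation to a multi-controlled single-qubit gate, then reduction to CNOTs and single-qubit gates embedded in $\mathbb{U}(4)$) found in the references the paper itself cites; the paper offers no independent proof of this lemma and simply defers to \citep{Nielsen_Chuang_2010,DiVincenzo_1995,brylinski2001universal}. Your handling of the multi-controlled-gate obstacle by invoking those explicit constructions is appropriate for an existence statement of this kind.
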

This immediately gives us the following corollary.

\begin{corollary} \label{cor:qudit}
    Any $\prod_{n} d_n\times \prod_{n} d_n$ unitary matrix with $d_n$ a power of 2 can be written as a quantum circuit using two-qudit gates of size $d_m d_n\times d_m d_n$. 
\end{corollary}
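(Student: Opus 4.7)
The plan is to reduce the corollary to Lemma C.1 by viewing each qudit of dimension $d_n=2^{k_n}$ as a cluster of $k_n$ qubits. Since the total Hilbert space has dimension $\prod_n d_n = 2^{M}$ with $M = \sum_n k_n$, Lemma C.1 already guarantees a decomposition of any target unitary $\mathcal{U}$ into a finite sequence of $4\times 4$ two-qubit gates acting on this system of $M$ qubits. The goal is then to reorganize this qubit-level decomposition into a sequence of two-qudit gates of the required sizes.

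First, I would fix the identification of qubit labels with qudits: label the qubits by pairs $(n,\ell)$ with $n\in\{1,\dots,N\}$ and $\ell\in\{1,\dots,k_n\}$, so that the qubits with the same $n$ jointly form qudit $n$. Next, I would classify each two-qubit gate $U^{(\alpha)}$ in the decomposition according to the qudit(s) that contain its two target qubits. For a gate acting on qubits $(n,\ell)$ and $(n,\ell')$ that sit inside the same qudit $n$, I would lift it to a single-qudit gate of size $d_n\times d_n$ by tensoring with the identity on the remaining $k_n-2$ qubits of that qudit. For a gate acting on qubits $(m,\ell)$ and $(n,\ell')$ with $m\neq n$, I would lift it to a two-qudit gate of size $d_m d_n\times d_m d_n$ by tensoring with the identity on the other $k_m-1$ qubits of qudit $m$ and the other $k_n-1$ qubits of qudit $n$. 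In both cases the lift is unitary, is supported only on the relevant qudit(s), and represents exactly the same operator on the full Hilbert space as the original two-qubit gate.

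Finally, I would absorb single-qudit gates into two-qudit gates: any single-qudit gate on qudit $n$ can be written as a $d_n d_{n'}\times d_n d_{n'}$ two-qudit gate by tensoring with the identity on some other qudit $n'$ (which exists provided $N\ge 2$; the $N=1$ case is trivial since $\mathcal{U}$ is itself a ``single-qudit'' unitary that can be placed in a two-qudit gate by the same trick, or handled directly). Composing the lifted gates in the original order then reproduces $\mathcal{U}$ exactly, yielding the desired decomposition into two-qudit gates of size $d_m d_n\times d_m d_n$.

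The main obstacle is mostly bookkeeping rather than genuine mathematical difficulty: one must verify carefully that embedding a $4\times 4$ gate into a larger $d_m d_n\times d_m d_n$ block (by tensoring with identity on the spectator qubits within the same two qudits) produces an operator identical to the original two-qubit gate on the global Hilbert space, and that the ordering of the lifted gates is preserved. A minor subtlety is that a two-qubit gate whose qubits straddle two qudits uses only four of the $d_m d_n$ basis states nontrivially, but this is fine because it is still a legitimate unitary on the full qudit pair. Once these embeddings are defined consistently, the corollary follows directly from Lemma C.1.
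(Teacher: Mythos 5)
Your proposal is correct and follows essentially the same route as the paper: reduce to the two-qubit universality lemma by viewing each qudit of dimension $d_n = 2^{k_n}$ as a cluster of qubits, then observe that each two-qubit gate embeds into a two-qudit gate of the required size. Your version is simply more explicit about the bookkeeping (classifying gates by which qudits contain their qubits and tensoring with identities on spectator qubits), which the paper compresses into the remark that two-qubit gates are a subset of two-qudit gates.
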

\begin{proof}
    It is possible to reshape the matrix into $2^M\times 2^M$ and use Lemma~\ref{lem:qubit} to obtain a two-qubit gates representation. Since two-qubit gates are a subset of two-qudit gates, this already concludes the proof. However, one can group the two-qubit gates that apply to the same qudit to reduce reduce the gate count.
\end{proof}

We also need another lemma from quantum computation \citep{Nielsen_Chuang_2010}.

\begin{lemma}[Phase Rotation] \label{lem:qubit_phase}
 For any diagonal unitary matrix of size $2^M\times 2^M$ (whose elements are in the form of $e^{i\theta_k}$), there exists a finite sequence of two-qubit gates with parameters $\{\theta_k\}$, where the structure of the sequence is fixed for all possible set of $\{\theta_k\}$ and each two-qubit gate is an analytic function of $\{\theta_k\}$, that can exactly represent the diagonal matrix. 
\end{lemma}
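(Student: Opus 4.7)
The plan is to realize the diagonal unitary as a product of commuting single-Pauli-string exponentials, and then reduce each such exponential to two-qubit gates. First, I would write $D=e^{iH}$ where $H$ is the diagonal Hermitian matrix with entries $\{\theta_k\}$. Since $H$ is diagonal, it lies in the commutative subalgebra spanned by all tensor products of $I$ and $Z$ on the $M$ qubits, so it admits an expansion
\begin{equation}
H \;=\; \sum_{S\subseteq\{1,\dots,M\}} \alpha_S\, Z_S,
\end{equation}
where $Z_S$ is the tensor product of $Z$ on the sites in $S$ and $I$ elsewhere. The coefficients $\alpha_S$ are obtained from $\{\theta_k\}$ by a Walsh--Hadamard transform and are therefore explicit linear (hence analytic) functions of $\{\theta_k\}$.

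Second, because all $Z_S$ mutually commute, $D=\prod_S e^{i\alpha_S Z_S}$, so it suffices to synthesize each factor with two-qubit gates whose parameters are analytic in $\alpha_S$. For $|S|\le 2$ the factor is already a one- or two-qubit diagonal gate. For $|S|\ge 3$ I would invoke the standard ``CNOT staircase'' identity: pick one distinguished qubit $j\in S$, conjugate the single-qubit rotation $e^{i\alpha_S Z_j}$ by a fixed pattern of CNOTs with targets at $j$ and controls at the remaining sites of $S$, and recover $e^{i\alpha_S Z_S}$ exactly. Only the central phase rotation carries the parameter $\alpha_S$; the surrounding CNOTs are parameter-independent and depend only on $S$.

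Third, I would assemble the full circuit by ordering the subsets $S$ in a fixed (say lexicographic) way and concatenating the corresponding staircases. Commutativity guarantees the ordering does not affect the product. Crucially the resulting template -- which qubits each gate acts on, and in which slot each parameter is injected -- depends only on $M$, not on $\{\theta_k\}$; each gate in the template is either a fixed CNOT or a phase rotation whose angle is a fixed linear combination of the $\theta_k$, and hence is analytic in $\{\theta_k\}$.

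The main obstacle is simply verifying the CNOT staircase identity in full generality, i.e.\ that $\mathrm{CNOT}_{i\to j}\,e^{i\alpha Z_j}\,\mathrm{CNOT}_{i\to j}=e^{i\alpha Z_i Z_j}$ and that iterating this construction through $|S|-1$ controls produces $e^{i\alpha Z_S}$ using exactly $2(|S|-1)$ CNOTs. This follows from the algebraic identity $\mathrm{CNOT}\,(I\otimes Z)\,\mathrm{CNOT}=Z\otimes Z$ on the control/target pair, extended inductively. Once this reduction is in hand, combining it with the commuting-exponentials factorization and the Walsh--Hadamard expression for $\alpha_S$ yields the fixed structure and analyticity claims required by the lemma.
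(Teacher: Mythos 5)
Your proposal is correct. The paper itself gives no proof of this lemma --- it is stated as imported from the quantum-computation literature with a citation to Nielsen and Chuang --- so you have supplied the standard constructive argument that the citation points to: expand the diagonal Hamiltonian $H=\mathrm{diag}(\theta_0,\dots,\theta_{2^M-1})$ in the commuting basis $\{Z_S\}$ via the Walsh--Hadamard transform, factor $e^{iH}=\prod_S e^{i\alpha_S Z_S}$, and realize each factor with a parameter-independent CNOT staircase around a single phase rotation $e^{i\alpha_S Z_j}$. The staircase identity $\mathrm{CNOT}\,(I\otimes Z)\,\mathrm{CNOT}=Z\otimes Z$ that you flag as the remaining verification is an immediate computation, and iterating it through the $|S|-1$ controls is routine. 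Notably, your construction explicitly secures the two clauses of the lemma that the paper actually relies on later (in Corollary~\ref{cor:qudit_continue}): the circuit template depends only on $M$, and every parameterized gate is $e^{i\alpha_S Z}$ with $\alpha_S$ a fixed linear combination of the $\theta_k$, hence an entire function of $\{\theta_k\}$ that survives the analytic continuation to imaginary angles used to reach nonunitary diagonal matrices. Two minor presentational points: the $\theta_k$ are only defined modulo $2\pi$, so one should fix a branch when forming $H$ (any fixed choice works since the final product depends only on $e^{i\theta_k}$); and the single-qubit rotations in your template should be regarded as two-qubit gates tensored with the identity to match the lemma's phrasing literally.
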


Similar to Corollary~\ref{cor:qudit}, Lemma~\ref{lem:qubit_phase} can also be extended to

\begin{corollary} \label{cor:qudit_phase}
     For any diagonal unitary matrix of size $\prod_{n} d_n\times \prod_{n} d_n$ with $d_n$ a power of 2, there always exists a finite sequence of two-qudit gates with parameters $\{\theta_k\}$, where the structure of the sequence is fixed for all possible set of $\{\theta_k\}$ and each two-qudit gate is an analytic function of $\{\theta_k\}$, that can exactly represent the diagonal matrix. 
\end{corollary}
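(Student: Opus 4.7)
The plan is to mirror the reduction used in Corollary~\ref{cor:qudit}, but track the analyticity and fixed-structure properties carefully. Since each $d_n$ is a power of $2$, write $d_n = 2^{m_n}$ and set $M = \sum_n m_n$, so the total dimension satisfies $\prod_n d_n = 2^M$. The key observation is that reshaping the $\prod_n d_n \times \prod_n d_n$ matrix as a $2^M \times 2^M$ matrix (i.e.\ relabeling each qudit index $i_n \in \{0,\dots,d_n-1\}$ by its binary expansion into $m_n$ qubit indices) preserves the diagonal structure: a matrix diagonal in the qudit basis remains diagonal in the induced qubit basis because the relabeling is just a reordering of basis vectors.

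Given this reshaping, Lemma~\ref{lem:qubit_phase} directly produces a finite sequence of two-qubit gates $\{G_k(\theta_k)\}$, with a structure that depends only on $M$ (not on the particular $\theta_k$), and with each $G_k$ an analytic function of the $\theta_k$, whose product equals the diagonal unitary in question. I would then show that this same sequence can be interpreted as a sequence of two-qudit gates: for each two-qubit gate, the two qubits it acts on either (i) lie in the same qudit $n$, in which case the gate (padded with identities on the other $m_n - 2$ qubits of that qudit) becomes a one-qudit gate, which is a special case of a two-qudit gate; or (ii) lie in two different qudits $m$ and $n$, in which case the gate (padded with identities on the remaining $m_m + m_n - 2$ qubits) becomes a two-qudit gate of size $d_m d_n \times d_m d_n$. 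In both cases, embedding in identities preserves analyticity in $\theta_k$, and the assignment of which qudits each gate acts on depends only on the fixed two-qubit structure from Lemma~\ref{lem:qubit_phase}, hence the qudit-level structure is also fixed.

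Finally, one can optionally group consecutive two-qudit gates acting on the same pair of qudits into a single two-qudit gate to reduce the count; this grouping is also a fixed structural operation determined only by the ordering from Lemma~\ref{lem:qubit_phase}, and the composed gate remains analytic in the underlying $\{\theta_k\}$ since matrix multiplication is polynomial (hence analytic).

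The main obstacle I anticipate is bookkeeping rather than conceptual: verifying that (a) the binary reshaping truly leaves the diagonal invariant (straightforward once the basis reordering is made explicit), and (b) the ``fixed structure'' property of Lemma~\ref{lem:qubit_phase} passes cleanly through the qubit-to-qudit embedding, i.e.\ that the sequence of qudit pairs on which gates act does not depend on $\{\theta_k\}$. Both follow from the fact that the qubit-to-qudit grouping is determined once and for all by the choice of $\{m_n\}$, independently of the diagonal unitary being represented.
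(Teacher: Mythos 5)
Your proposal is correct and follows essentially the same route as the paper, which leaves this corollary unproved beyond the remark that it extends Lemma~\ref{lem:qubit_phase} ``similar to Corollary~\ref{cor:qudit}'' (i.e.\ reshape the qudit space into qubits, apply the qubit-level lemma, and absorb the resulting two-qubit gates into one- or two-qudit gates). Your version is in fact more careful than the paper's, since you explicitly verify that the diagonal structure, the fixed gate sequence, and the analyticity in $\{\theta_k\}$ all survive the qubit-to-qudit embedding.
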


We can analytically continue Corollary~\ref{cor:qudit_phase} to nonunitary diagonal matrices.

\begin{corollary} \label{cor:qudit_continue}
     For any diagonal matrix of size $\prod_{n} d_n\times \prod_{n} d_n$ with $d_n$ a power of 2 with diagonal elements $\{a_k\}$, there exists a finite sequence of two-qudit gates with parameters $\{a_k\}$, where the structure of the sequence is fixed for all possible set of $\{\theta_k\}$, that can exactly represent the diagonal matrix. 
\end{corollary}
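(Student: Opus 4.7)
The plan is to obtain Corollary C.4 by analytic continuation from Corollary C.3. The key point is that Corollary C.3 guarantees not only existence of a decomposition but also that each two-qudit gate in that decomposition is an \emph{analytic} (in fact holomorphic) function of the phase parameters $\{\theta_k\}$, with a fixed combinatorial structure independent of the $\theta_k$'s. Analytic functions of real parameters extend uniquely to holomorphic functions of complex arguments on a neighborhood, so the whole construction is well-defined when each $\theta_k$ is allowed to take complex values.

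Concretely, for a diagonal matrix with nonzero entries $\{a_k\}$, I would choose complex logarithms $\theta_k = -i \log a_k$ so that $e^{i\theta_k} = a_k$, and substitute these complex values into the gate expressions supplied by Corollary C.3. Because the structure of the sequence is fixed, the resulting objects are a fixed-shape product of two-qudit tensors (no longer unitary in general), and because the map $\{\theta_k\} \mapsto \prod_\alpha U^{(\alpha)}(\{\theta_k\})$ equals $\mathrm{diag}(e^{i\theta_k})$ on the real domain and both sides are holomorphic in $\theta_k$, the identity of analytic continuations forces the product to equal $\mathrm{diag}(a_k)$ on the complex domain. In particular, the gates can be rewritten as holomorphic functions of $\{a_k\}$ on $(\mathbb{C}^*)^{\prod_n d_n}$.

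To reach arbitrary $\{a_k\}$ including zeros, I would use a limiting argument. For any target $\{a_k^\star\}$ possibly containing zeros, approach it by a sequence $\{a_k^{(\ell)}\}$ of all-nonzero tuples; the product of the constructed gates equals $\mathrm{diag}(a_k^{(\ell)})$ for every $\ell$, which converges to $\mathrm{diag}(a_k^\star)$. Provided each individual gate extends continuously as $a_k \to 0$, the limit gives the desired fixed-structure decomposition at $\{a_k^\star\}$. A cleaner route, if the formulas permit, is to inspect the explicit decomposition from Lemma C.2 and verify that, after substituting $a_k$ for $e^{i\theta_k}$, each gate entry is a polynomial (or at worst a rational function with no pole at $a_k = 0$) in the $a_k$'s, so that evaluation at zero is immediate.

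The main obstacle will be the zero-entry case. The analytic continuation is clean on the open dense set where all $a_k \ne 0$, but controlling behavior at the boundary requires either (i) a continuity argument that works gate-by-gate rather than only for the full product, or (ii) a direct algebraic inspection of the Lemma C.2 formulas to rule out $1/a_k$ factors. If the natural decomposition from Lemma C.2 does produce problematic denominators, one may need to reorder or regroup the gates so that singular factors cancel before taking the limit; failing that, the corollary can safely be restricted to invertible diagonal matrices, which is already sufficient for the universality and rank-representation arguments that use it elsewhere in the paper.
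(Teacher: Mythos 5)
Your proposal is correct and takes essentially the same route as the paper: analytically continuing the fixed-structure phase-rotation decomposition of Corollary~\ref{cor:qudit_phase} to complex $\theta_k$ (equivalently, substituting $\theta_k = -i\log a_k$). In fact you are more careful than the paper's own one-line proof, which simply says ``setting $\theta_k$'s to be imaginary numbers concludes the proof'' and does not address the non-positive or zero diagonal entries that your limiting/continuity discussion handles.
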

\begin{proof}
    Consider Corollary~\ref{cor:qudit_phase}, since both the full unitary matrix and the sequence of two-qudit gates are finite, and are analytic functions of $\{\theta_k\}$, their analytic continuation must also be equal. Therefore, setting $\theta_k$'s to be imaginary numbers concludes the proof.
\end{proof}

Now, we are finally ready to prove the universality theorem.

\begin{theorem}[Universality of QuanTA] \label{thm:uma_app}
Let $W$ be an arbitrary matrix of shape $2^M \times 2^M$. For any collection of local dimensions $\{d_n\}$ such that each $d_n$ is a power of 2 and $\prod_n d_n = 2^M$, it is always possible to decompose $W$ into a finite sequence of tensors $\{T^{(\alpha)}\}$, where each tensor applies on two axes with local dimensions $d_{m^{(\alpha)}}$ and $d_{n^{(\alpha)}}$.
\end{theorem}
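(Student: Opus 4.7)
My proposal is to reduce the general (nonunitary) case to the already-established unitary and diagonal cases by going through the singular value decomposition. Any real (or complex) matrix $W$ of shape $2^M \times 2^M$ admits a factorization $W = U \Sigma V^{\dagger}$, where $U$ and $V$ are unitary matrices of the same shape and $\Sigma$ is a diagonal matrix with non-negative real entries. Since the product of finite sequences of two-axis tensors is again a finite sequence of two-axis tensors, it suffices to exhibit a decomposition for each of the three factors individually and then concatenate them.

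\textbf{Step 1: the unitary factors.} Apply Corollary~C.2 to each of $U$ and $V^{\dagger}$ with respect to the chosen local dimensions $\{d_n\}$. Because every $d_n$ is a power of $2$ and $\prod_n d_n = 2^M$, the hypothesis of Corollary~C.2 is exactly met, so both $U$ and $V^{\dagger}$ are expressible as finite sequences of two-qudit gates acting on axes of dimensions $d_{m^{(\alpha)}}$ and $d_{n^{(\alpha)}}$.

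\textbf{Step 2: the diagonal factor.} The matrix $\Sigma$ is diagonal (though not in general unitary). This is precisely the setting of Corollary~C.5, which gives a finite sequence of two-qudit gates whose product equals $\Sigma$, again for any local-dimension decomposition in which each $d_n$ is a power of $2$.

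\textbf{Step 3: concatenation.} Concatenating the three sequences obtained in Steps~1 and 2 in the order corresponding to $U \Sigma V^{\dagger}$ produces a single finite sequence of tensors $\{T^{(\alpha)}\}$, each acting on two axes with dimensions taken from $\{d_n\}$, whose overall action equals $W$. Since the set of QuanTA structures allows arbitrary placements of two-axis tensors, this sequence is a legal QuanTA parameterization.

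\textbf{Anticipated obstacles.} Given the machinery of Corollaries~C.2 and C.5 is already in place, there is no substantive obstacle — the only subtlety is observing that finite sequences are closed under concatenation and that the SVD provides the right bridge from arbitrary matrices to (unitary $\times$ diagonal $\times$ unitary) factors handled by the previous results. The one point worth flagging is that the analytic-continuation step in Corollary~C.5 is what allows $\Sigma$ to have arbitrary (possibly zero) singular values; without this, one would only obtain unitary matrices and the theorem would be restricted accordingly.
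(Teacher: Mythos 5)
Your proposal is correct and follows essentially the same route as the paper's own proof: take the SVD $W = U \Sigma V^{\dagger}$, decompose the unitary factors via the two-qudit universality corollary, handle the diagonal factor via the analytically-continued phase-rotation corollary, and concatenate the resulting gate sequences. The only discrepancies are in the corollary numbering, not in substance.
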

\begin{proof}
    Let $U$, $S$ and $V$ be the singular value decomposition of $W$. Since $U$ and $V$ are unitary matrices, it immediately follows from Corollary~\ref{cor:qudit} that they can be written as a finite sequence of tensors. In addition, since $S$ is a diagonal matrix, Corollary~\ref{cor:qudit_continue} shows that it can also be written as a finite sequence of tensors. Combining all tensors into the same QuanTA operator by applying the sequentially, we obtain the full QuanTA decomposition of $W$.
\end{proof}

\begin{theorem}[Rank representation]
Let $R = r(\mathcal{T})$ be the rank of the full QuanTA operator, $R^{(\alpha)} = r(T^{(\alpha)})$ be the rank of individual tensors, $d$ be the total dimension of $\mathcal{T}$, $d^{(\alpha)} = d_{m^{(\alpha)}} d_{n^{(\alpha)}}$ be the total dimension of the individual tensor $T^{(\alpha)}$, and $N_T$ be the total number of tensors. The following inequality always holds
\begin{equation}
    \sum_{\alpha}\frac{d R^{(\alpha)}}{d^{(\alpha)}} - d(N_T - 1) \le R \le \min_\alpha{\frac{d R^{(\alpha)}}{d^{(\alpha)}}}.
\end{equation}
\end{theorem}
\begin{proof}
    The rank of the product of two matrices $A$ and $B$ of shape $d\times d$ satisfies the inequality $ r(A) + r(B) - d \le r(AB) \le \min\{r(A), r(B)\}$. In QuanTA, each tensor can be viewed as a large matrix, where $T^{(\alpha)}$ is applied on the $m^{(\alpha)}$th and $n^{(\alpha)}$th axes, and identity matrix is applied to the rest of the axes. In this case, the rank of this operation is the same as the rank of $T^{(\alpha)}$ times the rank of the product of the identity matrices, which equals $\frac{d R^{(\alpha)}}{d^{(\alpha)}}$. Then, using the above inequality multiple times concludes our proof.
\end{proof}

\begin{theorem}[Composition openness]
There exists a set $\mathbb{S} = \{\mathcal{M}_k\}$ of matrices generated from a fixed QuanTA structure and two matrices $\mathcal{M}_1,\mathcal{M}_2 \in \mathbb{S}$ such that $\mathcal{M}_1 \mathcal{M}_2 \not\in \mathbb{S}$.
\end{theorem}

\begin{proof}
We consider a set of matrices $\mathbb{S}$ generated by the QuanTA structure that consists of one layer of single-qubit rotation gates followed by a layer of two-qubit CNOT gates \cite{Nielsen_Chuang_2010} and then one layer of single-qubit rotation gates.\footnote{Note that the single-qubit gates can be absorbed into the two-qubit gates, so this structure is within our QuanTA framework.} This is a set of unitary matrices with entanglement generation determined by the number of layers of CNOT gates. Consider $\mathcal{M}_1,\mathcal{M}_2 \in \mathbb{S}$, according to quantum information theory, it is not possible to have $\mathcal{M}_1 \mathcal{M}_2 \in \mathbb{S}$. This is because $\mathcal{M}_1 \mathcal{M}_2$ has two layers of CNOT gates which can generate more entanglement than any element $\mathcal{M}_3 \in \mathbb{S}$ that only contains one layer of CNOT gates.    
\end{proof}

\section{Datasets} \label{app:data}
\begin{table}[ht]
\centering
\adjustbox{max width=\linewidth}{
\setlength{\tabcolsep}{4pt} 
\def\arraystretch{1.25}
\begin{tabular}{ccccccc}
\hline
\textbf{Dataset}                          & \textbf{Task}                                 & \textbf{\# Train} & \textbf{\# Val} & \textbf{\# Test} & Metric      & Answer \\ \hline
DROP \cite{dua-etal-2019-drop}           & Reading comprehension with discrete reasoning & 2000               &   800              & 1200                 & $F_1$-Score & Phrase \\ \hline
\textsc{Commonsense}170K \cite{hu-etal-2023-llm}& Commonsense reasoning (Train)                 & 170020            &    400          & --               & --          & --     \\
BoolQ  \cite{clark2019boolq}             & Commonsense reasoning (Test)                  & --                & --              &  3270            & Accuracy    & Yes/No \\
PIQA    \cite{bisk2019piqa}              & Commonsense reasoning (Test)                  & --                & --              &  1838            & Accuracy    & Option \\
SIQA    \cite{sap2019socialiqa}          & Commonsense reasoning (Test)                  & --                & --              &  508             & Accuracy    & Option \\
HellaSwag  \cite{zellers2019hellaswag}   & Commonsense reasoning (Test)                  & --                & --              &  10042           & Accuracy    & Option \\
WinoGrande \cite{sakaguchi2019winogrande} & Commonsense reasoning (Test)                  & --                & --              &  1267           & Accuracy    & Option \\
ARC-Easy   \cite{clark2018think}         & Commonsense reasoning (Test)                  & --                & --              &  2376           & Accuracy    & Option \\
ARC-Challenge \cite{clark2018think}       & Commonsense reasoning (Test)                  & --                & --              &  1172            & Accuracy    & Option \\
OBQA      \cite{mihaylov2018suit}         & Commonsense reasoning (Test)                  & --                & --              &  500            & Accuracy    & Option \\ \hline
\textsc{Math}10K  \cite{hu-etal-2023-llm} & Arithmetic reasoning (Train)                  & 9519              &   400           &  --             & --          & --     \\
AQuA   \cite{ling2017program}              & Arithmetic reasoning (Test)                   & --                & --              &  254             & Accuracy    & Option \\
GSM8K  \cite{cobbe2021training}           & Arithmetic reasoning (Test)                   & --                & --              &  1319            & Accuracy    & Number \\
MAWPS  \cite{KoncelKedziorski2016MAWPSAM} & Arithmetic reasoning (Test)                   & --                & --              &  238             & Accuracy    & Number \\
SVAMP   \cite{patel2021nlp}               & Arithmetic reasoning (Test)                   & --                & --              &  1000            & Accuracy    & Number \\ \hline
\end{tabular}
}
\vspace{5pt}
\caption{List of datasets used in this work.}
\label{tab:dataset}
\end{table}

In this section, we describe the datasets used in this paper. In Table~\ref{tab:dataset}, the list of datasets used in this paper is listed. 

For the DROP dataset \cite{dua-etal-2019-drop}, we subsample 2000 samples from the original train set as our train set, 800 samples from the train set as our validation set, and 1200 samples from the original validation set as our test set, since the original dataset does not contain a test set on Hugging Face. In addition, the $F_1$-score is used to measure the closeness of the models' output compared to the ground truth since it is in general a phrase. 

For all commonsense reasoning tasks, we first fine-tune a single model on the \textsc{Commonsense}170K dataset collected by~\citep{hu-etal-2023-llm}, and evaluate the same model on eight different commonsense reasoning tasks \cite{clark2019boolq,bisk2019piqa,sap2019socialiqa,zellers2019hellaswag,sakaguchi2019winogrande,clark2018think,mihaylov2018suit}, which we use the version provided by~\citep{hu-etal-2023-llm}. The \textsc{Commonsense}170K dataset is split into a train set of 170020 samples, and a validation set of 400 samples. All of the commonsense reasoning tasks are either Yes/No questions or multiple choice questions. In these tasks, the model is asked to choose the best answer from all the options, and accuracy is used as the evaluation metric.

For all arithmetic reasoning tasks, we fine-tune a single model on the \textsc{Math}10K dataset \cite{hu-etal-2023-llm} and evaluate the same model on four different tasks \cite{ling2017program,cobbe2021training,KoncelKedziorski2016MAWPSAM,patel2021nlp}. We split the \textsc{Math}10K dataset into a train set of 9519 samples, and a validation set of 400 samples. Similar to the commonsense reasoning tasks, we use the version of the datasets provided by~\citep{hu-etal-2023-llm}. In addition, in~\citep{hu-etal-2023-llm}, there was found some data leak issues in some of the arithmetic datasets. Here, we only consider the datasets that are unaffected. In the arithmetic reasoning tasks, although the model is asked to generate the step-by-step solution for the final answer, only the final answer is parsed to measure the accuracy. For AQuA, we parse the output text to find the last character such that it is one of the options. For the other three tasks which require numerical answers, we simply parse the last number from the output text, and consider the answer to be correct if it is the same as the ground truth for up to 4 decimal places.

\section{Hyperparameters and Experimental Details} \label{app:hyperparameter}

In this section, we describe the hyperparameter choices and the experimental details. All the experiments are conducted on NVIDIA A100 GPUs with 80 GB memory. GPU count used in each experiment will be explained later. The code used to produce the experiments is released on GitHub at \url{https://github.com/quanta-fine-tuning/quanta}. Our code is implemented using~\citep{hu-etal-2023-llm} and ~\citep{malladi2023mezo} as references.

\subsection{QuanTA parameterization}
Although QuanTA supports decomposing the hidden dimension into an arbitrary number of axes $N$ and a wide selection of collections of tensors $\{T^{(\alpha)}\}$ as long as the tensors are compatible with the axes, in this work, we focus on $N=3$, $4$ and $5$, where exactly one tensor is applied on each unique combination of axes. For example, there are 3 tensors when $N=3$, 6 tensors when $N=4$, and $10$ tensors when $N=5$. Note that if $N=2$, there is only a single tensor and this approach reduces to the full fine-tuning. Since these tensors are applied sequentially, and matrix multiplications in general don't commute, the order of tensor application can also affect the result. In the case of $N=3$, the QuanTA layer is constructed as Fig.~\ref{fig:diagram} in the main paper. For $N=4$ and $N=5$, we show the construction in Fig.~\ref{fig:diagram45}

\begin{figure}[ht]
    \centering
    \includegraphics[width=0.6\linewidth]{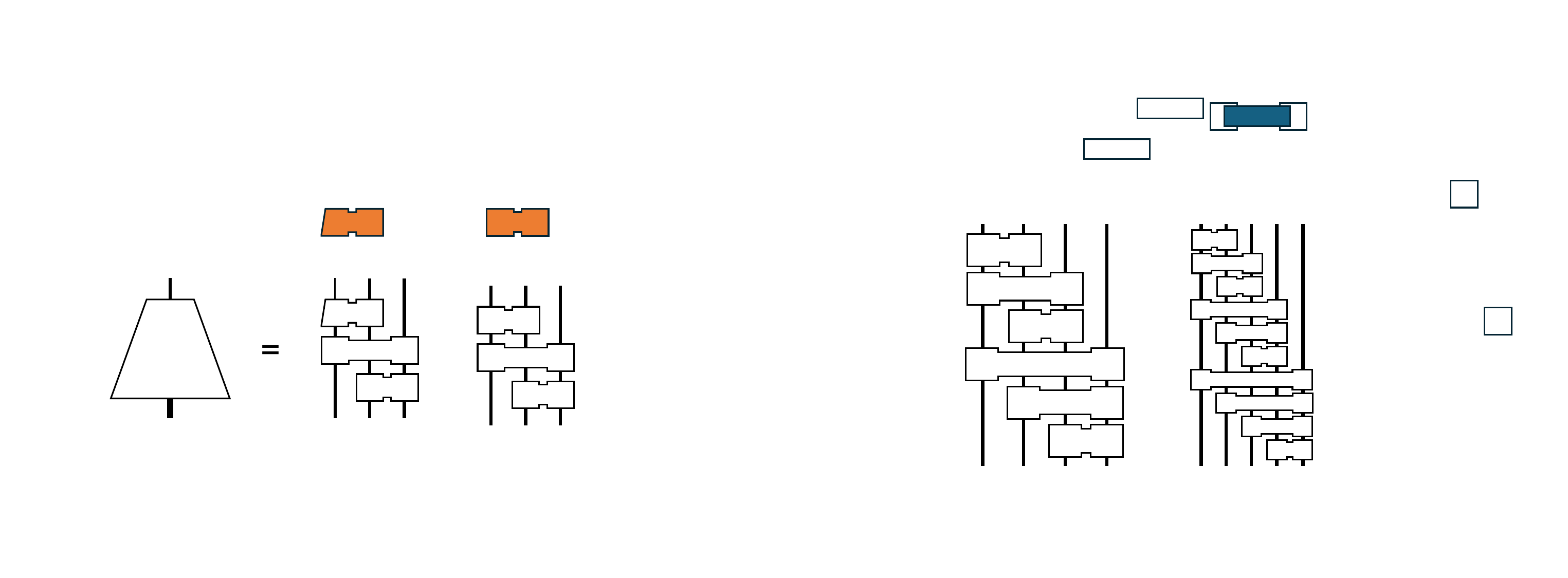}
    \caption{QuanTA architecture used in this work for $N=4$ and $N=5$. }
    \label{fig:diagram45}
\end{figure}

\subsection{Experiments on DROP dataset}

In Table~\ref{tab:drop_hyperparmaeter}, we show the hyperparameters used for the DROP experiments. Only LoRA and QuanTA are applied to the 13- and 70-billion-parameter LLaMA2 models. For the 13-billion-parameter model or smaller, only a single A100 GPU is used for fine-tuning. And for the 70-billion-parameter model, four A100 GPUs are used. For all experiments, the hyperparameters are only optimized on the 7-billion-parameter LLaMA2 model, and applied directly on larger models. In addition, all the hyperparameters are optimized on the validation set, before evaluating the model on the test set. We further note that we choose the best checkpoint obtained during fine-tuning, in terms of the $F_1$-score on the validation set, as the final model to apply on the test set. Because of this, the number-of-epoch parameter does not introduce a significant effect to the final result, as long as the training converges. Therefore, this hyperparameter is chosen rather arbitrarily between 3 and 6. We further note that the batch sizes reported here are the effective batch sizes, including the gradient accumulation steps.

\begin{table}[t]
\centering
\begin{tabular}{ccc}
\toprule
\textbf{Experiment} & \textbf{Hyperparameters} & \textbf{Values} \\
\midrule
\textbf{FT} & Batch Size & \{2, \underline{4}, 8\} \\
 & Optimizer & AdamW \\
& Scheduler & Linear Scheduler \\
 & Learning Rate & \{5e-6, \underline{1e-5}\} \\
 & Weight Decay & 0 \\
 & Dropout & 0 \\
 & Number of GPUs & 1 \\ \midrule
\textbf{Series Adapters} & Batch Size & 4 \\
 & Optimizer & AdamW \\
 & Scheduler & Linear Scheduler \\
 & Learning Rate & 1e-4 \\
 & Weight Decay & 0 \\
 & Dropout & 0 \\
 & Bottleneck & [64, 128] \\
 & Modules & Default \\
 & Number of GPUs & 1 \\ \midrule
\textbf{Parallel Adapters} & Batch Size & 4 \\
 & Optimizer & AdamW \\
 & Scheduler & Linear Scheduler \\
 & Learning Rate & 1e-4 \\
 & Weight Decay & 0 \\
 & Dropout & 0 \\
 & Bottleneck & [64 ,128] \\
 & Modules & Default \\
 & Number of GPUs & 1 \\ \midrule
\textbf{LoRA} & Batch Size & 4 \\
 & Optimizer & AdamW \\
 & Scheduler & Linear Scheduler \\
 & Learning Rate & \{1e-4, 2e-4, \underline{5e-4}, \underline{6e-4}\} \\
 & Weight Decay & 0 \\
 & Dropout & 0 \\
 & $r$ & [4, 6, 8, 16, 24, 32, 48, 64, 96, 128] \\
 & $\alpha$ & 16 \\
 & Modules & (\texttt{q\_proj} \texttt{v\_proj}) (Same as Default) \\
 & Number of GPUs & [1, 4] \\ \midrule
\textbf{QuanTA} & Batch Size & 4 \\
 & Optimizer & AdamW \\
 & Scheduler & Linear Scheduler \\
 & Learning Rate &  1e-4 \\
 & Weight Decay & 0 \\
 & Dropout & 0 \\
 & $N$ & [3, 4, 5] \\
 & $d_1$-$d_2$-$\cdots$-$d_N$ & [8-8-4-4-4, 8-8-8-8, 16-8-8-4, 16-16-4-4, \\
 &                            & 16-16-16, 16-8-8-5, 16-8-8-8] \\
 & Modules & (\texttt{q\_proj} \texttt{v\_proj}) \\
 & Number of GPUs & [1, 4] \\
\bottomrule
\end{tabular}
\vspace{5pt}
\caption{Hyperparameters used for DROP dataset for various fine-tuning methods. Curly brackets include the hyperparameter values tested during hyperparameter optimization, with the actual hyperparameter(s) underscored. Square brackets include hyperparameter values for different experiments conducted in the main paper. }
\label{tab:drop_hyperparmaeter}
\end{table}

\subsection{Experiments on commonsense datasets}
We explain the details of the experiments on commonsense datasets. As mentioned before, in this experiment, we first fine-tune the model on the joint \textsc{Commonsense}170K dataset and evaluate the same fine-tuned model on all downstream tasks. Similar to the drop dataset, we optimize the hyperparameters on the validation set that we created from the \textsc{Commonsense}170K dataset and choose the best checkpoint in terms of the validation accuracy to evaluate on the benchmarks. The hyperparameters are listed in Table~\ref{tab:commonsense_hyperparmaeter}.

\begin{table}[t]
\centering
\begin{tabular}{ccc}
\toprule
\textbf{Experiment} & \textbf{Hyperparameters} & \textbf{Values} \\
\midrule
\textbf{FT} & Batch Size & 4 \\
 & Optimizer & AdamW \\
& Scheduler & Linear Scheduler \\
 & Learning Rate & 1e-5 \\
 & Weight Decay & 0 \\
 & Dropout & 0 \\
 & Number of GPUs & 1 \\ \midrule
\textbf{QuanTA} & Batch Size & 4 \\
 & Optimizer & AdamW \\
 & Scheduler & Linear Scheduler \\
 & Learning Rate &  \{\underline{5e-5}, 1e-4\} \\
 & Weight Decay & 0 \\
 & Dropout & 0 \\
 & $N$ & 4 \\
 & $d_1$-$d_2$-$\cdots$-$d_N$ & [16-8-8-4, 16-8-8-5] \\
 & Modules & (\texttt{q\_proj} \texttt{v\_proj}) \\
 & Number of GPUs & 1 \\
\bottomrule
\end{tabular}
\vspace{5pt}
\caption{Hyperparameters used for commonsense experiments. Curly brackets include the hyperparameter values tested during hyperparameter optimization, with the actual hyperparameter(s) underscored. Square brackets include hyperparameter values for different experiments conducted in the main paper. }
\label{tab:commonsense_hyperparmaeter}
\end{table}

\subsection{Experiments on arithmetic datasets}
We further explain the details of the experiments on arithmetic datasets. Similar to previous,  we first fine-tune the model on the joint \textsc{Math}10K dataset and evaluate the same fine-tuned model on all downstream tasks and we optimize the hyperparameters on the validation set that we created from the \textsc{Math}10K dataset and choose the best checkpoint in terms of the validation accuracy to evaluate on the benchmarks.  The hyperparameters are listed in Table~\ref{tab:math_hyperparmaeter}. Notice that we choose a different set of module for LoRA to match the experimental setup of~\citep{hu-etal-2023-llm,liu2024dora}.

\begin{table}[t]
\centering
\begin{tabular}{ccc}
\toprule
\textbf{Experiment} & \textbf{Hyperparameters} & \textbf{Values} \\
\midrule
\textbf{FT} & Batch Size & 4 \\
 & Optimizer & AdamW \\
& Scheduler & Linear Scheduler \\
 & Learning Rate & 1e-5 \\
 & Weight Decay & 0 \\
 & Dropout & 0 \\
 & Number of GPUs & 1 \\ \midrule
 \textbf{LoRA} & Batch Size & 4 \\
 & Optimizer & AdamW \\
 & Scheduler & Linear Scheduler \\
 & Learning Rate & \{\underline{1e-4}, 3e-4\} \\
 & Weight Decay & 0 \\
 & Dropout & 0 \\
 & $r$ & 32 \\
 & $\alpha$ & 16 \\
 & Modules & (\texttt{q\_proj} \texttt{k\_proj} \texttt{v\_proj} \texttt{up\_proj} \texttt{down\_proj}) \\
 & Number of GPUs & 1 \\ \midrule
\textbf{QuanTA} & Batch Size & 4 \\
 & Optimizer & AdamW \\
 & Scheduler & Linear Scheduler \\
 & Learning Rate &  \{1e-4, \underline{3e-4}\} \\
 & Weight Decay & 0 \\
 & Dropout & 0 \\
 & $N$ & 4 \\
 & $d_1$-$d_2$-$\cdots$-$d_N$ & \{16-8-8-4, \underline{16-16-4-4}, 16-8-8-5, \underline{16-16-5-4}\} \\
 & Modules & \{(\texttt{q\_proj} \texttt{v\_proj}), \\
 &   &  \underline{(\texttt{q\_proj} \texttt{k\_proj} \texttt{v\_proj} \texttt{up\_proj} \texttt{down\_proj})}\} \\
 & Number of GPUs & 1 \\
\bottomrule
\end{tabular}
\vspace{5pt}
\caption{Hyperparameters used for arithmetic experiments. Curly brackets include the hyperparameter values tested during hyperparameter optimization, with the actual hyperparameter(s) underscored. Square brackets include hyperparameter values for different experiments conducted in the main paper. }
\label{tab:math_hyperparmaeter}
\end{table}

\clearpage

\section{Additional Benchmarking Results} \label{app:add_bench}
In this section, we include benchmarking results with additional fine-tuning methods and on additional datasets. In Table~\ref{tab:add_drop} and \ref{tab:add_commonsense}, we include additional comparisons to MoRA~\cite{jiang2024mora}, LoRETTA~\cite{yang2024loretta}, and KronA~\cite{edalati2022krona}. In Table~\ref{tab:add_roberta}, we include additional results on five commonsense understanding tasks from the GLUE benchmark~\cite{wang2019gluemultitaskbenchmarkanalysis} using RoBERTa model~\cite{liu2019robertarobustlyoptimizedbert}.

\begin{table}[h!]
    \centering
    \adjustbox{max width=\linewidth}{
    \setlength{\tabcolsep}{6pt} 
    \def\arraystretch{1.2}
    \begin{tabular}{ccc}
    \thickhline
    \textbf{PEFT Method}   & \textbf{\# Params (\%)} & \textbf{$F_1$ Score ($\uparrow$)} \\ \hline
    FT                     & 100\%                   & 59.4                              \\
                                                Series                 & 0.747\%                 & 58.8                              \\
                                                Parallel               & 0.747\%                 & 59.0                              \\
                                                LoRA$_{r=8}$           & 0.062\%                 & 54.0                              \\
                                                LoRA$_{r=32}$          & 0.249\%                 & 54.8                              \\
                                                LoRA$_{r=128}$         & 0.996\%                 & 56.2                              \\
                                                \textit{MoRA}$_{r=8}$           & 0.062\%                 & 58.6                              \\
                                                \textit{MoRA}$_{r=32}$          & 0.249\%                 & 58.2                              \\
                                                \textit{MoRA}$_{r=128}$         & 0.996\%                 & 58.9                              \\
                                                \textit{LoRETTA}$_{r=8}$           & 0.009\%                 & 48.6                              \\
                                                \textit{LoRETTA}$_{r=32}$          & 0.083\%                 & 54.9                              \\
                                                \textit{LoRETTA}$_{r=128}$         & 1.254\%                 & 59.1                              \\
                                                \textit{KronA}$_{64\textrm{-}64}$           & 0.008\%                 & 50.9                              \\
                                                \textit{KronA}$_{256\textrm{-}16}$          & 0.062\%                 & 57.7                              \\
                                                \textit{KronA}$_{1024\textrm{-}4}$          & 0.996\%                 & 58.5                              \\
                                                \textbf{QuanTA}$_{16\textrm{-}8\textrm{-}8\textrm{-}4}$ \textbf{(Ours)} & 0.041\%                 & 59.5                              \\
                                                \textbf{QuanTA}$_{16\textrm{-}16\textrm{-}16}$ \textbf{(Ours)} & 0.261\%                 & \textbf{59.6}                     \\ \hline
    \end{tabular}
    }
    \vspace{5pt}
    \caption{Benchmark of various fine-tuning methods on the DROP dataset using LLaMA2 7 billion parameter model as the base model. Fine-tuning methods in addition to the main paper are shown in italic font. In each case, we report the average of $F_1$ score over 4 experiments with different random seeds. For LoRA, MoRA and LoRETTA, the subscript labels the rank; for KronA the subscript labels the sizes of the matrices; and for QuanTA, the subscript labels the dimensions of axes.}
    \label{tab:add_drop}
\end{table}

\vspace{-8pt}

\begin{table}[h!]
\centering
\adjustbox{max width=\linewidth}{
\setlength{\tabcolsep}{4pt} 
\def\arraystretch{1.25}
\begin{tabular}{ccccccccccc}
\thickhline
 \multirow{2}{*}{\textbf{PEFT Method}} & \multirow{2}{*}{\textbf{\# Params (\%)}} & \multicolumn{9}{c}{\textbf{Accuracy ($\uparrow$)}}                                                                                                    \\ \cline{3-11} 
                                                                                   &                                         & \textbf{BoolQ} & \textbf{PIQA} & \textbf{SIQA} & \textbf{HellaS.} & \textbf{WinoG.} & \textbf{ARC-e} & \textbf{ARC-c} & \textbf{OBQA} & \textbf{Avg.} \\ \hline
 LoRA$^\dagger$                        & 0.70\%                                   & 70.8           & 85.2          & 79.9          & 91.7             & 84.3            & 84.2           & 71.2           & 79.0          & 80.8          \\
 DoRA$^\dagger$                        & 0.35\%                                   & 74.5           & 88.8          & 80.3          & \textbf{95.5}    & 84.7            & 90.1           & 79.1           & \textbf{87.2} & 85.0          \\
DoRA$^\dagger$                        & 0.71\%                                   & \textbf{74.6}  & \textbf{89.3} & 79.9          & \textbf{95.5}    & 85.6            & 90.5           & 80.4           & 85.8          & 85.2          \\
\textit{LoRETTA}                       & 0.13\%                                   & 74.3& 87.5 & 80.9          & 94.5    & 86.7            & \textbf{92.1}           & 81.5           & 85.8          & 85.4          \\
\textit{KronA}                       & 0.052\%                                   & 72.9  & 87.1 & 80.6          & 92.1    & 85.1            & 87.8           & 76.0           & 84.3          & 83.2          \\
\textbf{QuanTA (Ours)}                & 0.035\%                                  & 74.3           & 88.1          & \textbf{81.8} & 95.1             & \textbf{87.3}   & 91.1  & \textbf{81.7}  & \textbf{87.2} & \textbf{85.8} \\ \thickhline
\end{tabular}
}
\vspace{5pt}
\caption{Benchmark on various commonsense reasoning tasks using LLaMA3 8 billion parameter model as the base model. Fine-tuning methods in additional to the main paper are shown in italic font. All results of models and PEFT methods labeled with ``*'' are from~\citep{hu-etal-2023-llm}, and results with ``$^\dagger$'' are from~\citep{liu2024dora}.}
\vspace{-20pt}
\label{tab:add_commonsense}

\end{table}

\vspace{13pt}

\begin{table}[h!]
\centering
\adjustbox{max width=\linewidth}{
\setlength{\tabcolsep}{4pt} 
\def\arraystretch{1.25}
\begin{tabular}{ccccccc}
\thickhline
\multirow{2}{*}{\textbf{PEFT Method}} & \multirow{2}{*}{\textbf{\# Params (\%)}} & \multicolumn{5}{c}{\textbf{Accuracy ($\uparrow$)}} \\ \cline{3-7} 
                                       &                                          & \textbf{SST-2} & \textbf{MRPC} & \textbf{CoLA} & \textbf{RTE} & \textbf{STS-B} \\ \hline
LoRA                                   & 0.71\%                                   & \textbf{94.01}          & 91.48         & \textbf{62.08}         & 74.51        & 90.48          \\
\textbf{QuanTA (Ours)}                 & 0.62\%                                   & 93.81          & \textbf{91.67}         & \textbf{62.08}         & \textbf{77.26}        & \textbf{90.68}          \\ \thickhline
\end{tabular}
}
\vspace{5pt}
\caption{Benchmark on five natural language understanding tasks using RoBERTa model as the base model.}
\label{tab:add_roberta}
\end{table}

\section{Systematical Way to Generate \texttt{einsum} Expressions} \label{app:einsum}

In the main paper, we show an example of how to implement QuanTA operation easily using \texttt{einsum}. Here, we show how to systematically generate the \texttt{einsum} expression more generally. For illustrative purposes, we focus on the case where there is exactly one tensor for every combination of two axes.

First, we show how to generate the \texttt{einsum} expression for applying the QuanTA operator.
\begin{minted}
[
frame=lines,
framesep=2mm,
baselinestretch=1.2,
bgcolor=LightGray,
fontsize=\footnotesize,
]
{python}
import itertools
import opt_einsum as oe

def quanta_apply_einsum_expr(N):
    current_symbols_inds = list(range(N))
        
    expr = "..."
    for i in current_symbols_inds:
        expr += oe.get_symbol(i)
            
    for (dim1, dim2) in itertools.combinations(range(-1, -N-1, -1), 2):
        symbol_ind1 = current_symbols_inds[dim1]
        symbol_ind2 = current_symbols_inds[dim2]
        symbol_ind3 = symbol_ind1 + N
        symbol_ind4 = symbol_ind2 + N
        expr += "," + \
            oe.get_symbol(symbol_ind4) + \
            oe.get_symbol(symbol_ind3) + \
            oe.get_symbol(symbol_ind2) + \
            oe.get_symbol(symbol_ind1)
        current_symbols_inds[dim1] = symbol_ind3
        current_symbols_inds[dim2] = symbol_ind4
        
    expr += "->..."
    for i in current_symbols_inds:
        expr += oe.get_symbol(i)

    return expr
\end{minted}

Then, applying the QuanTA operator to the hidden vector is as simple as

\begin{minted}
[
frame=lines,
framesep=2mm,
baselinestretch=1.2,
bgcolor=LightGray,
fontsize=\footnotesize,
]
{python}
y = torch.einsum(quanta_apply_expr, x, *T)
\end{minted}

Similarly, it is easy to generate the \texttt{einsum} expression for obtaining the full QuanTA operator as

\begin{minted}
[
frame=lines,
framesep=2mm,
baselinestretch=1.2,
bgcolor=LightGray,
fontsize=\footnotesize,
]
{python}
import itertools
import opt_einsum as oe

def quanta_op_einsum_expr(N):
    current_symbols_inds = list(range(N))
    
    expr = "..."
    for i in current_symbols_inds:
        expr += oe.get_symbol(i)
        
    for (dim1, dim2) in itertools.combinations(range(-1, -N-1, -1), 2):
        symbol_ind1 = current_symbols_inds[dim1]
        symbol_ind2 = current_symbols_inds[dim2]
        symbol_ind3 = symbol_ind1 + N
        symbol_ind4 = symbol_ind2 + N
        expr += "," + \
            oe.get_symbol(symbol_ind4) + \
            oe.get_symbol(symbol_ind3) + \
            oe.get_symbol(symbol_ind2) + \
            oe.get_symbol(symbol_ind1)
        current_symbols_inds[dim1] = symbol_ind3
        current_symbols_inds[dim2] = symbol_ind4
    
    expr += "->..."
    for i in current_symbols_inds:
        expr += oe.get_symbol(i)
        
    return expr[1:]
\end{minted}

and obtaining the full QuanTA operator is

\begin{minted}
[
frame=lines,
framesep=2mm,
baselinestretch=1.2,
bgcolor=LightGray,
fontsize=\footnotesize,
]
{python}
full_T = torch.einsum(quanta_op_expr, *T)
\end{minted}

We note that the padding and truncation operators are omitted when the QuanTA operator has a different size than the original weight matrix. In addition, in our actual implementation, we use \texttt{opt\_einsum} library to optimize the contraction order, reducing the contraction cost.

\section{Example Model Outputs} \label{app:output}
In this section, we provide some example output of QuanTA fine-tuned LLaMA model.

\begin{table}[t]
\centering
\begin{tabular}{p{1cm}p{11cm}}
\thickhline
\textbf{Task} & \textbf{Model Output} \\
\hline
\textbf{DROP} &
\textbf{Prompt:} 
\begin{Verbatim}[fontsize=\small, formatcom=\color{black}, breaklines=true]
Passage:  Hoping to rebound from their embarrassing home loss to the Lions, the Raiders flew to Invesco Field at Mile High for an AFC West duel with the Denver Broncos.  In the first quarter, Oakland trailed early as Broncos QB Jay Cutler completed a 9-yard TD pass to WR Brandon Stokley for the only score of the period.  In the second quarter, the Raiders got on the board with kicker Sebastian Janikowski getting a 38-yard field goal.  However, Denver continued to pound away as RB Cecil Sapp got a 4-yard TD run, while kicker Jason Elam got a 23-yard field goal. In the third quarter, Oakland began to come back as QB Josh McCown (who was a gametime decision heading into the game) completed a 46-yard TD pass to WR Jerry Porter for the only score of the period.  In the fourth quarter, the Raiders took the lead as DT Gerard Warren sacked Cutler in the end zone for a safety, while LB Thomas Howard returned an interception 44 yards for a touchdown (followed by a successful two-point conversion pass from McCown to WR Ronald Curry).  However, the Broncos tied the game up with Elam's 20-yard field goal.  In overtime, Oakland managed to make Denver go three-and-out on their first possession.  A 33-yard run by RB LaMont Jordan helped set up Janikowski for a game-winning 52-yard field goal. Broncos head coach Mike Shanahan called timeout before the kick could begin.  Janikowski's second try hit off the very tip of the left goal post and was no good, giving Denver a chance to win the game.  The Broncos won with Elam getting a 23-yard field goal. With the loss, not only did the Raiders fall to 0-2, but they had lost 11-straight games (currently the NFL's longest losing streak) dating back to Week 9 of the 2006 season.
Question: How many field goals did each kicker score in the first half?
Answer:
\end{Verbatim}
\textbf{Output:}
\begin{Verbatim}[fontsize=\small, formatcom=\color{black}, breaklines=true]
1
\end{Verbatim}
\\
\thickhline
\end{tabular}
\vspace{5pt}
\caption{Examples of QuanTA trained LLaMA2-7B Outputs for the DROP dataset.}
\end{table}

\begin{table}[t]
\centering
\begin{tabular}{p{1cm}p{11cm}}
\thickhline
\textbf{Task} & \textbf{Model Output} \\
\hline
\textbf{BoolQ} &
\textbf{Prompt:} 
\begin{Verbatim}[fontsize=\small, formatcom=\color{black}, breaklines=true]
Please answer the following question with true or false, question: is ford escape a 4 wheel drive vehicle? Answer format: true/false
\end{Verbatim}
\textbf{Highest probability choice:}
\begin{Verbatim}[fontsize=\small, formatcom=\color{black}, breaklines=true]
Answer: the correct answer is true.
\end{Verbatim}
\\ \hline
\textbf{SIQA} &
\textbf{Prompt:} 
\begin{Verbatim}[fontsize=\small, formatcom=\color{black}, breaklines=true]
Please choose the correct answer to the question: Carson took Lee's risk by going skydiving with him off of the plane. What will Lee want to do after? Answer1: hug Carson Answer2: buy a ticket Answer3: kick Carson. Answer format: answer1/answer2/answer3
\end{Verbatim}
\textbf{Highest probability choice:}
\begin{Verbatim}[fontsize=\small, formatcom=\color{black}, breaklines=true]
Answer: the correct answer is answer1.
\end{Verbatim}
\\ \hline
\textbf{SIQA} &
\textbf{Prompt:} 
\begin{Verbatim}[fontsize=\small, formatcom=\color{black}, breaklines=true]
Please choose the correct ending to complete the given sentence: Personal Care and Style: [header] How to make ice balls [title] Buy a package of water balloons. [step] This method is cheap, quick, and easy-perfect if you don't want to spend money on specialty molds for making ice balls. All you'll need is a few round water balloons (and, of course, water and a freezer. Ending1: ) [substeps] Uninflated balloons: this method requires 2 balls, 1 ice cream stick and 2 water balloons in a large bag (1 at a time). Open the sides of your volcano and shake the tupperware from side to side a few times. Ending2: ) [substeps] Of course, there is no telling if the water balloons will be successful; you just know that they will eventually explode on the first try. Plus, they're basically too small to correctly hang from your roof. Ending3: ) [substeps] Remember that, if you're using your ice balls to cool your drinks, they'll need to be able to fit inside your glasses. Thus, when it comes to water balloons, generally, the smaller, the better. Ending4: ) [substeps] Use unopened water balloons, because they tend to be too full. Plastic ones are dry and will be harder to put in the freezer. Answer format: ending1/ending2/ending3/ending4
\end{Verbatim}
\textbf{Highest probability choice:}
\begin{Verbatim}[fontsize=\small, formatcom=\color{black}, breaklines=true]
Answer: the correct answer is ending3.
\end{Verbatim}
\\
\thickhline
\end{tabular}
\vspace{5pt}
\caption{Examples of QuanTA trained LLaMA3-8B Outputs for selected commonsense datasets.}
\end{table}

\begin{table}[t]
\centering
\begin{tabular}{p{1cm}p{11cm}}
\thickhline
\textbf{Task} & \textbf{Model Output} \\
\hline
\textbf{MAWPS} &
\textbf{Prompt:} 
\begin{Verbatim}[fontsize=\small, formatcom=\color{black}, breaklines=true]
Bob will rent a car for the weekend . He can choose one of 2 payment plans . The first plan costs 57.98 for 2 days plus 13 cents per mile . The second plan costs 45.98 for 2 days plus 18 cents per mile . How many miles does Bob need to drive for the 2 plans to cost the same ?
Answer:
\end{Verbatim}
\textbf{Output:}
\begin{Verbatim}[fontsize=\small, formatcom=\color{black}, breaklines=true]
Let's assume that Bob drives x miles.  For the first plan, the cost would be: 57.98 + 0.13x  For the second plan, the cost would be: 45.98 + 0.18x  We want to find the value of x such that the two costs are equal.  So, we can set up an equation:  57.98 + 0.13x = 45.98 + 0.18x  Simplifying this equation, we get:  0.05x = 12.00  Dividing both sides by 0.05, we get:  x = 240  Therefore, Bob needs to drive 240 miles for the two plans to cost the same.
\end{Verbatim}
\\\thickhline
\end{tabular}
\vspace{5pt}
\caption{Examples of QuanTA trained LLaMA2-7B Outputs for a selected arithmetic dataset.}
\end{table}

\end{document}